\documentclass[twoside,11pt]{article}

%

\usepackage{jmlr2e}

\usepackage{amssymb}
\usepackage{amsmath}
\usepackage{algorithm}
\usepackage[noend]{algpseudocode}
\usepackage{chngcntr}
\usepackage{mathtools}
\usepackage{bbm}
\usepackage{array}

\DeclarePairedDelimiter\ceil{\lceil}{\rceil}
\DeclarePairedDelimiter\floor{\lfloor}{\rfloor}
\DeclareMathOperator*{\argmin}{arg\,min}


\newcommand{\E}{\mathrm{E}}
\newcommand*{\QEDB}{\hfill\ensuremath{\blacksquare}}%

\usepackage{etoolbox}
\apptocmd{\appendices}{\apptocmd{\thesection}{: }{}{}}{}{}





\ShortHeadings{Adversarial Online Learning with noise}{Resler and Mansour}
\firstpageno{1}

\begin{document}

\title{Adversarial Online Learning with noise}

\author{\name Alon Resler \email alonress@gmail.com \\
       \addr Blavatnik School of Computer Science,
       Tel Aviv University,
       Tel Aviv, Israel
       \AND
       \name Yishay\ Mansour \email mansour.yishay@gmail.com \\
       \addr Blavatnik School of Computer Science, Tel Aviv University, 
       Tel Aviv, Israel\\
       and Google Research, Israel}

\editor{}

\maketitle

\begin{abstract}
We present and study models of adversarial online learning where the feedback observed by the learner is \textit{noisy}, and the feedback is either \textit{full information} feedback or \textit{bandit} feedback. Specifically, we consider binary losses xored with the noise, which is a Bernoulli random variable. We consider both a constant noise rate and a variable noise rate. Our main results are tight regret bounds for learning with noise in the adversarial online learning model.
\end{abstract}

\section{Introduction}

Online learning is a general framework for sequential decision-making under uncertainty. In each round, a  learner chooses an action from a set of $K$ available actions and suffers a loss associated with that action and observes ``some'' feedback about the losses. The losses in each round are arbitrary, possibly adversarial, and
the goal of the learner is to minimize the cumulative loss over a fix time horizon $T$. We measure the performance of the learner using the \textit{regret} which is the expected difference between the cumulative loss of the learner and that of the best fixed action.

Traditionally, there are two main types of feedback: \textit{full-information} feedback
and the \textit{Bandit} feedback.
In the \textit{full-information} feedback, often referred to as \textit{prediction with expert advice}, in each round the learner observes the losses of all actions. A typical example of the \textit{full-information} feedback is a hypothetical stock investor who invests all of his money in one of $K$ stocks on each day. At the end of the day, the investor incurs the outcome associated with the chosen stock and observes the outcomes of all the stocks. 
In the \textit{Bandit} feedback the learner only observes the loss associated with the action played. The typical example of the \textit{bandit} feedback is online advertising. Consider an Internet website that presents one of $K$ ads to each user, and its goal is to maximize the number of clicked ads. Naturally, we know whether the user clicked on the presented ad, but we have no information about other ads (whether the user would have clicked on them, if they were presented).

Both models have been extensively studied and 
received significant practical and theoretical interest.
The regret bound for the full information model is $\Theta(\sqrt{T \ln K})$ (see \cite{littlestone1994weighted,freund1997decision,kalai2005efficient}), and for the bandit model is $\Theta(\sqrt{KT})$. (See, \cite{auer2002nonstochastic,audibert2009minimax,cesa2006prediction,BubeckC12}). 
However, both models assumes that the observed feedback is exact.
In some real life scenarios, the feedback can be corrupted by noise, which is the focus of our work.
For example, in the web-advertising example, we might observe an incorrect feedback since a click event might be missing (due to network connection problem or logging error), recorded incorrectly (due to a browser issue, such as privacy setting), or alternatively, the user might be misidentified (due to multiple users using the same computer). 

In this paper, we present and study settings in which the feedback is corrupted by random noise. 
We assume that the losses are Boolean and that the noise is also Boolean, and the observation is the xor of the loss and the noise.
For the noise we consider Bernoulli random variable with probability $p$, denoted by $B(p)$. We consider a few variations of the noise model:

For the {\em constant noise rate}, we assume that there is a fixed probability $p$ for the noise (for all actions and rounds).  
For the  {\em variable noise rate}, we assume that there exists a distribution $D$ such that in each round $t$ we draw a vector of probabilities, where is $p_{i,t}$ the noise of action $i$ in round $t$. For both settings, we study both the case that the noise is known to the learner, and where it is unknown. Our main contribution is deriving tight regret bounds for those settings, both upper bounds (algorithms) and lower bounds (impossibility results). In the following we give a high level view of our results.

The {\em constant noise model} has a fixed parameter $\epsilon\in[0,1]$ and for every round $t$ the loss is xored with Bernoulli random variable with parameter $ p = \frac{1-\epsilon}{2}$. For the full information model we have a tight regret bound of $\Theta(\frac{1}{\epsilon}\sqrt{T \ln K})$, both when the noise parameter is known and when it is unknown. For the bandit feedback model we have a tight regret bound of $\tilde{\Theta}(\frac{1}{\epsilon}\sqrt{T K})$, both when the noise parameter is known and when it is unknown.

The {\em variable noise model} has a distribution $D$ over $[0,1]^K$ and at each round $t$, we draw from $D$ a realized noise vector $(\epsilon_{1,t},\ldots , \epsilon_{K,t})$, where $p_{i,t}= (1-\epsilon_{i,t})/2$ is the noise parameter for action $i$ at round $t$. 
In the following we describe our results for the
{\em uniform model}, where the marginal distribution of $D$ of each action is uniform $[0,1]$. For the full information we have a contrast between the case where the realized noise is observed, where we have a tight regret bound of $\Theta(T^{2/3} \ln^{1/3}K)$, and the case where the realized noise is not observed, where we have a linear regret, i.e., $\Theta(T)$. For the bandit model we have a tight bound of $\tilde{\Theta}(T^{2/3} K^{1/3})$, when the realized noise is not observed, and linear regret, i.e., $\Theta(T)$, when the realized noise is not observed. We also discuss the case of a general distribution and derive regret bounds for other specific distributions.
Our main results are summarized in Figure \ref{table}.

\begin{figure}[t]\label{table}
\begin{center}
\begin{tabular}{ | c | c| c |} 
\textbf{Feedback type}  \textbackslash   \textbf{ Noise model}  & Constant noise& Variable noise  \\ 
& &(Uniform) \\ 
\hline
& &
\\[-0.9em]
Full information (known noise) & $\Theta(\frac{1}{\epsilon}\sqrt{T \ln K})$ & $\Theta(T^{2/3} \ln^{1/3}K)$ \\ 
& & 
\\[-0.9em]
\hline
& &
\\[-0.9em]
Full Information (unknown noise) & $\Theta(\frac{1}{\epsilon}\sqrt{T \ln K})$ & $\Theta(T)$ \\ 
& & 
\\[-0.9em]
\hline
& &
\\[-0.9em]
Bandit (known noise) & $\tilde{\Theta}(\frac{1}{\epsilon}\sqrt{T K})$ & $\tilde{\Theta}(T^{2/3} K^{1/3})$  \\
& & 
\\[-0.9em]
\hline
& &
\\[-0.9em]
Bandit (unknown noise) & $\tilde{\Theta}(\frac{1}{\epsilon}\sqrt{T K})$ & $\Theta(T)$  \\ 
\hline
\end{tabular}
\end{center}
\caption{Results summery}
\end{figure}

\subsection*{Related work}

The work of \cite{kocak2016online} generalized a partial-feedback scheme proposed by \citep{mannor2011bandits,AlonCGMMS17}, in which the learner observes losses associated with a subset of actions which depends on the selected action, and considered a zero mean noise added to the side observations. Their main result is an algorithm that guarantees a regret of $\tilde{O}(\sqrt{ T})$ , where the constant depends on a graph property.

The work of \citet{wu2015online} studies a stochastic model where the feedback  of an action has the losses of each other action  with an additive noise of a zero-mean Gaussian, where variance depends both on the action played and observed. For this model they derive problem-depend lower bounds and matching upper bounds. 

\citet{gajane2018corrupt} studied a stochastic bandit problem where the feedback is drawn from a different distribution than the rewards, but there exist a link function relating them. They provide lower and upper bound for this setting.

Binary sequence prediction with noise was studied by \citet{weissman2000universal} and \citet{weissman2001twofold}. They show upper bounds on the regret for binary sequence prediction with a constant noise rate (the binary sequence prediction model is implicitly a full feedback model). Their regret bound is similar to our regret bound (in the full information with constant noise).

There is a vast literature in statistics, operation research and
machine learning regarding various noise models. In
computational learning theory, popular noise models include
random classification noise \cite{AngluinL:1988} and malicious
noise \cite{Valiant:1985,KearnsL93}. The above noise models use the PAC model, and study the generalization error, while we consider an online setting and study the regret.

\bigskip
\textbf{Paper Organization:} Section \ref{sec:2} formalizes our model. Section \ref{sec:3} studies the \textit{full information with constant noise} settings, providing algorithms and matching lower bounds.
Section \ref{sec:4} studies \textit{full information with variable noise} settings and derives  algorithm, analyzes their regret, and proves a matching lower bound for specific noise distribution. 
Section \ref{sec:5} studies the \textit{bandit feedback} settings both for the constant noise and variable noise model.

\section{Model}\label{sec:2}

We consider adversarial decision problem with finite \textit{actions} (or \textit{actions}) set $A = \{1,2,\ldots, K\}$. On each round $t=1,2,\ldots,T$ the \textit{environment} selects a loss vector $\vec{\ell_{t}} \in \{0,1\}^K$ where $\ell_{i,t}$ is the loss associated with action $i$ at round $t$. Then, the \textit{learner} (or \textit{algorithm}) chooses an \textit{action} $I_t$ and incurs a loss $\ell_{I_t,t}$. 

The main difference between our models and the standard online model is that the learner observes a noisy feedback of the loss (to be specified separately in each setting).
Before presenting our models we start with a general definition of a noisy feedback of a single loss.

\begin{definition}
Let $\ell\in\{0,1\}$ be a loss, and let $\epsilon \in [0,1]$ be a parameter. We define the \textbf{$\epsilon$-noisy feedback} to be the following the random variable
\[
c = \ell \oplus R_{\epsilon}
\]
where $R_{\epsilon}$ is Bernoulli random variable with parameter $p=\frac{1-\epsilon}{2}$ (i.e., $\Pr[R_{\epsilon}=1]=p=\frac{1-\epsilon}{2}$). 
\end{definition}

Using the above definition we  present our four different settings, which are different in the feedback that the learner observes and the noise parameter selection.
The settings  are as follow:
\begin{enumerate}
\item \textbf{Full Information with Constant Noise:} In this setting, there exists a constant noise parameter $\epsilon\in[0,1]$, such that for every round $t$ the learner observes the \textit{$\epsilon$-noisy feedback}, $c_{i,t}$ for each action $i$, i.e., $c_{i,t}=\ell_{i,t}\oplus R_{\epsilon}$. 
\item\textbf{Full Information with Variable Noise:} In this setting, there exists a distribution $D$ over $[0,1]^K$. At the beginning of each round $t$, we draw from $D$ a realized noise $(\epsilon_{1,t},\ldots , \epsilon_{K,t})$, where $\epsilon_{i,t}\in[0,1]$ is the noise parameter action $i$ at round $t$. We assume that the noise vectors are drawn independently from $D$ at each round $t$. The learner observes, for each action $i$, an \textit{$\epsilon_{i,t}$-noisy feedback} $c_{i,t}$, i.e., $c_{i,t}=\ell_{i,t}\oplus R_{\epsilon_{i,t}}$. 
\item \textbf{Bandit with Constant Noise:} In this setting, there exists a constant noise parameter $\epsilon\in[0,1]$, 
such that for every round $t$ the learner observes the \textit{$\epsilon$-noisy feedback}  of the action he played, i.e., $c_{I_t,t}= \ell_{i,t}\oplus R_{\epsilon}$ where $I_t$ is the action played in round $t$.  
\item\textbf{Bandit with Variable Noise:} In this setting, there exists a distribution $D$ over $[0,1]^K$. At the beginning of each round $t$, we draw from $D$ a realized noise $(\epsilon_{1,t},\ldots , \epsilon_{K,t})$, where $\epsilon_{i,t}\in[0,1]$ is the noise parameter action $i$ at round $t$. We assume that the noise vectors are drawn independently from $D$ at each round $t$.

The learner observes only the feedback for the action he played, i.e., $c_{I_t,t}= \ell_{I_t,t}\oplus R_{\epsilon_{I_t,t}}$, where $I_t$ is the action played in time $t$.
\end{enumerate}

Each of the models can have two variants: {\em known noise parameters}, where the learner observes the noise parameters or {\em unknown noise parameters}, where the learner doesn't observe the noise parameters. In the constant noise, the noise parameter is $\epsilon$ and in the variable noise, the noise parameters are the realized noise parameters at each round $t$, i.e., $(\epsilon_{1,t},\ldots , \epsilon_{K,t})$.
For our main results, we assume that the noise parameters are {\em known} to the learner.  When we examine the setting where the learner does not know the noise parameters, we state it explicitly.

We measure the performance of the learner using the (expected) \textit{regret}
of the {\em true losses}, namely,
\[
Regret(T) = \E\bigg[\sum_{t=1}^T \ell_{I_t,t}\bigg] - \min_{i \in A}\\E\bigg[\sum_{t=1}^T \ell_{i,t}\bigg]
\]
where the losses are selected by an adversary and the expectation is taken over the randomness of the algorithm and the randomness of the noise. 

\begin{algorithm}[t]
\caption{Exponential Weights Scheme}\label{alg:EWS}
\begin{algorithmic}[1]
\State \textbf{Initialization:} $w_{i,1} = 1$ for all $i \in A$
\State \textbf{Parameters:} $\eta > 0$
\For{$t = 1,2,...,T$}
\State Construct the probability distribution $q_t$ with $$q_{i,t} = \frac{w_{i,t}}{W_t} \text{ where } W_t=\sum_{i=1}^K w_{i,t}$$
\State Play a random action $I_t$ according to $q_t$
\State Incur loss $\ell_{I_t,t}$
\State Observe feedback according to the specific settings
\State Construct loss estimate $\hat{\ell}_{i,t} = EST(i, c_{i,t}, \vec{q}_t, I_t)$ for all $i \in A$
\State Update weights for all $i \in A$: $$w_{i,t+1} = w_{i,t}\exp(-\eta \hat{\ell}_{i,t})$$
\EndFor
\end{algorithmic}
\end{algorithm}

The algorithms presented in this paper are variants of the \textit{Exponential Weights Scheme} (see Algorithm \ref{alg:EWS}). In the \textit{Exponential Weights Scheme} (\textbf{EWS}) the algorithm maintains weight $w_{i,t}$ for each action $i$ (initially $w_{i,t} = 1$). On round $t$ the algorithm chooses an action proportional to the weights, based on a distribution $q_t$. After observing the feedback of round $t$, the algorithm updates the weights to $w_{i,t+1}$ using the previous weights $w_{i,t}$, the observations (i.e., $c_{i,t}$) and the noise parameter. 
Each noise setting determines how the feedback is constructed and observed (line 8 in the algorithm template). 
Each algorithm determines how to construct the loss estimate $\hat{\ell}_{i,t}$ (line 9 in the algorithm).
We denote generically $\hat{\ell}_{i,t} = EST(i, c_{i,t}, \vec{q}_t, I_t)$, where $EST$ is the loss estimate function that will be implemented differently in each setting and for each algorithm. 

\noindent{\bf Notations:} 
Let $\hat{L}_{ON, T}=\sum_{t=1}^T \sum_{i=1}^K q_{i,t}\hat{\ell}_{i,t}$ and $\hat{L}_{k,T} = \sum_{t=1}^T \hat{\ell}_{k,t}$  the estimated loss of the online algorithm and of action $k$, respectively. We denote by $L_{ON, T} = \sum_{t=1}^T \sum_{i=1}^K q_{i,t}\ell_{i,t}$ and $L_{k,T} = \sum_{t=1}^T \ell_{k,t}$ the expected loss of the online algorithm and the loss of action $k$, respectively.

We denote by $B(p)$ the \textit{Bernoulli distribution} with parameter $p$ and by $B(n,p)$ the \textit{Binomial distribution} with $n$ trials and parameter $p$.

\section{Full Information with Constant Noise model}\label{sec:3}

In this section, we consider the \textit{Full Information with Constant Noise} feedback model. In the first part, we derive an algorithm that uses the constant noise parameter $\epsilon$ and obtains regret bound of $O(\frac{1}{\epsilon}\sqrt{T \ln K})$. Then, we show how to obtain the same regret bound when the noise parameter $\epsilon$ is unknown. In the second part, we derive a lower bound, which shows that the regret of our algorithm is asymptotically optimal.     

\subsection{Algorithms}

In this section we derive the algorithms that establish the upper bound on the regret. The idea is to construct an unbiased estimator for each loss. Let $\epsilon \in [0,1]$ and let $p = \frac{1-\epsilon}{2}$ be the noise parameter. The unbiased estimator is
\[
EST(i, c_{i,t}, \vec{q}_t, I_t) = \frac{c_{i,t} - p}{1-2p}=\hat{\ell}_{i,t}\;.
\]
The estimator is unbiased since,
$$
\E[\hat{\ell}_{i,t}] = \frac{p(1-\ell_{i,t}) + (1-p)\ell_{i,t} - p}{1-2p} = \ell_{i,t}\;.
$$

The following theorem establishes the regret bound when we use the 
Exponential Weights Scheme with the above unbiased estimator.

\begin{theorem}\label{thm:2}
Let $\epsilon \in [0,1]$, denote $p = \frac{1-\epsilon}{2}$ and assume $T \geq \frac{1}{4}\ln K$. Then running \textit{Exponential Weights Scheme} under the \textit{Full Information with Constant Noise} setting with the following loss estimate
\[
EST(i, c_{i,t}, q_t, I_t) = \frac{c_{i,t} - p}{1-2p} = \hat{\ell}_{i,t}
\]
and for $\eta = \epsilon \; \sqrt{\frac{\ln K}{T}}$ we have,
$$
Regret(T) \leq \frac{2}{\epsilon}\sqrt[]{T \ln K}
$$
\end{theorem}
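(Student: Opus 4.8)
The plan is to run the standard exponential-weights (Hedge) potential analysis, with the one twist that the loss estimates $\hat{\ell}_{i,t}$ are unbounded as $\epsilon\to 0$ and can be negative, so I must control their range carefully and exploit the hypothesis $T\geq\frac14\ln K$ to justify the key per-round inequality. First I would reduce the regret on the \emph{true} losses to the regret on the \emph{estimated} losses. Since $q_t$ is determined by the feedback of rounds $1,\dots,t-1$, and conditioned on that history $\E[\hat{\ell}_{i,t}]=\ell_{i,t}$ by the unbiasedness already verified, the tower rule gives $\E[\hat{L}_{ON,T}]=\E[L_{ON,T}]=\E[\sum_t \ell_{I_t,t}]$ and $\E[\hat{L}_{k,T}]=L_{k,T}$ for every fixed action $k$. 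Hence it suffices to bound $\E[\hat{L}_{ON,T}-\hat{L}_{k^\ast,T}]$ for $k^\ast=\argmin_k L_{k,T}$.

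Next comes the core potential argument. Writing $W_t=\sum_i w_{i,t}$, I would bound $\ln(W_{T+1}/W_1)$ from below by $-\eta\hat{L}_{k,T}-\ln K$ (using $W_{T+1}\geq w_{k,T+1}=\exp(-\eta\hat{L}_{k,T})$ and $W_1=K$), and from above by telescoping the per-round increment $\ln(W_{t+1}/W_t)=\ln\sum_i q_{i,t}e^{-\eta\hat{\ell}_{i,t}}$. Applying the elementary inequality $e^{-x}\leq 1-x+x^2$ with $x=\eta\hat{\ell}_{i,t}$, followed by $\ln(1+y)\leq y$, yields
$$
\ln\frac{W_{T+1}}{W_1}\leq -\eta\hat{L}_{ON,T}+\eta^2\sum_{t=1}^T\sum_{i=1}^K q_{i,t}\hat{\ell}_{i,t}^2 .
$$
Combining the two bounds and rearranging gives the standard estimate $\hat{L}_{ON,T}-\hat{L}_{k,T}\leq \frac{\ln K}{\eta}+\eta\sum_{t,i}q_{i,t}\hat{\ell}_{i,t}^2$.

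The main obstacle is justifying the displayed inequality, i.e.\ verifying that $\eta\hat{\ell}_{i,t}\geq -1$, which is the range where $e^{-x}\leq 1-x+x^2$ holds. From the estimator formula and $1-2p=\epsilon$, the estimate lies in $[-\tfrac{1-\epsilon}{2\epsilon},\tfrac{1+\epsilon}{2\epsilon}]$, so $\hat{\ell}_{i,t}\geq -\tfrac{1}{2\epsilon}$ and $\hat{\ell}_{i,t}^2\leq \tfrac{1}{\epsilon^2}$. With the prescribed $\eta=\epsilon\sqrt{\ln K/T}$ we then get $\eta\hat{\ell}_{i,t}\geq -\tfrac12\sqrt{\ln K/T}\geq -1$ precisely because $T\geq\tfrac14\ln K$; this is exactly where the hypothesis is consumed, and it is the one place the argument departs from the noiseless Hedge analysis.

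Finally I would take expectations, substitute the variance bound $\sum_i q_{i,t}\hat{\ell}_{i,t}^2\leq 1/\epsilon^2$, and plug in $\eta$. The two resulting terms are $\frac{\ln K}{\eta}=\frac1\epsilon\sqrt{T\ln K}$ and $\frac{\eta T}{\epsilon^2}=\frac1\epsilon\sqrt{T\ln K}$, whose sum is the claimed $\frac{2}{\epsilon}\sqrt{T\ln K}$, completing the bound on $\E[\hat{L}_{ON,T}-\hat{L}_{k^\ast,T}]$ and therefore on $Regret(T)$.
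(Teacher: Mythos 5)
Your proposal is correct and follows essentially the same route as the paper: the Hedge potential argument (the paper's Lemma~\ref{lem:3}), the range check $-\eta\hat{\ell}_{i,t}\leq 1$ consuming the hypothesis $T\geq\frac14\ln K$, unbiasedness to pass to true losses, the second-moment bound $1/\epsilon^2$, and the same final substitution of $\eta$. The only (harmless) differences are that you bound $\hat{\ell}_{i,t}^2\leq 1/\epsilon^2$ pointwise where the paper bounds $\E[(\hat{\ell}_{i,t})^2]$, and that you are more explicit about the tower-rule/conditioning step that the paper's expectation-taking glosses over.
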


The following lemma establishes a well known property of EWS, and for completeness we give its proof in Appendix \ref{app:lem:3}.

\begin{lemma}\label{lem:3}
Let $\eta > 0$ and a sequence of loss estimates $\hat{\ell}_1,\ldots,\hat{\ell}_T$ where $t\quad \hat{\ell}_t : \{1,\ldots,K\} \rightarrow \mathbb{R}$ such that  $-\eta\hat{\ell}_{i,t} \leq 1$ for all $i$  and $t$, then the probability vectors $\vec{q}_1,\ldots,\vec{q}_T$ define in the \textit{Exponential Weights Scheme}, for any action $k$, satisfies
\begin{equation*}
\sum_{t=1}^T \sum_{i=1}^K q_{i,t}\hat{\ell}_{i,t} -  \sum_{t=1}^T \hat{\ell}_{k,t} \leq \frac{\ln K}{\eta} +
\eta\sum_{t=1}^T \sum_{i=1}^K q_{i,t} (\hat{\ell}_{i,t})^2
\end{equation*}
\end{lemma}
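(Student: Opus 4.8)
The plan is to use the standard potential-function argument, tracking the total weight $W_t = \sum_{i=1}^K w_{i,t}$ and bounding the cumulative ratio $W_{T+1}/W_1$ from both sides. First I would rewrite the one-step multiplicative change in terms of the distribution $q_t$: using the update rule $w_{i,t+1} = w_{i,t}\exp(-\eta\hat{\ell}_{i,t})$ together with $q_{i,t} = w_{i,t}/W_t$, one obtains directly that $W_{t+1}/W_t = \sum_{i=1}^K q_{i,t}\exp(-\eta\hat{\ell}_{i,t})$.

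Next I would bound each exponential term from above. This is exactly where the hypothesis $-\eta\hat{\ell}_{i,t}\le 1$ is used: applying the elementary inequality $e^y \le 1 + y + y^2$, valid for all $y \le 1$, with $y = -\eta\hat{\ell}_{i,t}$, gives $W_{t+1}/W_t \le 1 - \eta\sum_i q_{i,t}\hat{\ell}_{i,t} + \eta^2\sum_i q_{i,t}(\hat{\ell}_{i,t})^2$. Then applying $1+x \le e^x$ and taking logarithms converts the product over rounds into a telescoping sum, yielding the upper bound $\ln(W_{T+1}/W_1) \le -\eta\sum_{t}\sum_i q_{i,t}\hat{\ell}_{i,t} + \eta^2\sum_{t}\sum_i q_{i,t}(\hat{\ell}_{i,t})^2$.

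For the matching lower bound, I would note that $W_1 = K$ (all weights are initialized to $1$) and that for any fixed action $k$ we have $W_{T+1} \ge w_{k,T+1} = \exp\bigl(-\eta\sum_{t=1}^T \hat{\ell}_{k,t}\bigr)$, since $W_{T+1}$ is a sum of nonnegative weights. Taking logarithms gives $\ln(W_{T+1}/W_1) \ge -\eta\sum_{t=1}^T \hat{\ell}_{k,t} - \ln K$. Chaining this with the upper bound from the previous step, rearranging, and dividing through by $\eta > 0$ produces exactly the claimed inequality.

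The main obstacle — really the only nontrivial point — is justifying the quadratic exponential bound $e^y \le 1 + y + y^2$ for $y \le 1$, which is precisely what the assumption $-\eta\hat{\ell}_{i,t}\le 1$ secures; this bound fails for large positive $y$, so the hypothesis is genuinely needed and cannot be dropped. Everything else (the telescoping identity and the single-weight lower bound) is routine bookkeeping. I would present the proof in this order: the one-step ratio identity, the per-round upper bound via the exponential inequality, telescoping to bound $\ln(W_{T+1}/W_1)$ from above, the single-weight lower bound, and finally the algebraic rearrangement.
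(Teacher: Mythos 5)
Your proposal is correct and follows essentially the same argument as the paper's proof: the potential-function analysis of $W_t$, the per-round bound via $e^y \le 1+y+y^2$ for $y\le 1$ (which is where the hypothesis $-\eta\hat{\ell}_{i,t}\le 1$ enters, exactly as in the paper), the telescoping upper bound on $\ln(W_{T+1}/W_1)$, and the single-weight lower bound $W_{T+1}\ge w_{k,T+1}$ with $W_1=K$. No gaps; the only cosmetic difference is that you invoke $1+x\le e^x$ where the paper writes the equivalent $\ln(1-x)\le -x$.
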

\noindent

Full proof of Theorem \ref{thm:2} is given in Appendix \ref{app:thm:2}, and follows  by using Lemma \ref{lem:3}, the fact that the estimator is unbiased, and bounding the second moment of the estimator by $\E[(\hat{\ell}_{i,t})^2] \leq \frac{1}{\epsilon^2}$.

In  Theorem~\ref{thm:2}, the learner uses the noise parameter $\epsilon$ to derive an unbiased estimator. The following theorem (proof in Appendix \ref{app:thm:4}) shows that the same regret bound can be attained even when the leaner does not know the noise parameter $\epsilon$.

\begin{theorem}\label{thm:4}
Let $\epsilon \in [0,1]$ and denote $p = \frac{1-\epsilon}{2}$. Running \textit{Exponential Weights Scheme} under the \textit{Full Information with Constant Noise} setting with the following loss estimate
\[
EST(i, c_{i,t}, \vec{q}_t, I_t) = c_{i,t}=\hat{\ell}_{i,t}
\]
and for $\eta = \epsilon \; \sqrt{\frac{\ln K}{T}}$, we have,
$$
Regret(T) \leq \frac{2}{\epsilon}\sqrt{T \ln K}
$$
\end{theorem}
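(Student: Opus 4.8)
The plan is to run the Exponential Weights Scheme directly on the raw noisy feedback, $\hat{\ell}_{i,t}=c_{i,t}$, and to exploit the fact that although $c_{i,t}$ is \emph{not} an unbiased estimator of $\ell_{i,t}$, its conditional expectation is an \emph{affine} function of $\ell_{i,t}$ whose additive and multiplicative constants are the \emph{same} for every action. Writing $p=\frac{1-\epsilon}{2}$ and using $1-2p=\epsilon$, a one-line computation gives $\E[c_{i,t}\mid\mathcal{F}_{t-1}]=p+\epsilon\,\ell_{i,t}$, where $\mathcal{F}_{t-1}$ denotes the history before round $t$. The additive bias $p$ is common to all actions and will therefore cancel in the regret comparison, while the multiplicative factor $\epsilon$ is exactly what produces the $1/\epsilon$ in the bound. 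Note that forming these estimates requires no knowledge of $\epsilon$ whatsoever, which is the whole point of the ``unknown noise'' setting.

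First I would verify that Lemma~\ref{lem:3} applies: since $c_{i,t}\in\{0,1\}$ and $\eta>0$, we have $-\eta\hat{\ell}_{i,t}\le 0\le 1$ for every $\eta$, so here the precondition of the lemma holds \emph{automatically}, without the assumption $T\ge\frac14\ln K$ that was needed in Theorem~\ref{thm:2} (where the unbiased estimator could be negative). Applying the lemma with the comparator $k=\argmin_i L_{i,T}$ gives, for every realization,
\[
\sum_{t=1}^T\sum_{i=1}^K q_{i,t}c_{i,t}-\sum_{t=1}^T c_{k,t}\;\le\;\frac{\ln K}{\eta}+\eta\sum_{t=1}^T\sum_{i=1}^K q_{i,t}c_{i,t}^2 .
\]
Because $c_{i,t}\in\{0,1\}$ we have $c_{i,t}^2=c_{i,t}$, and since $\sum_i q_{i,t}=1$ the second-moment term is bounded deterministically by $\eta T$.

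Next I would take expectations of both sides. The only care needed is the conditioning: $q_{i,t}$ is $\mathcal{F}_{t-1}$-measurable while the round-$t$ noise is independent of the past, so the tower property yields $\E[q_{i,t}c_{i,t}]=\E[q_{i,t}]\,(p+\epsilon\ell_{i,t})$ and $\E[c_{k,t}]=p+\epsilon\ell_{k,t}$. Summing over $t$ and $i$ and using $\sum_i q_{i,t}=1$, the expected left-hand side becomes
\[
\Big(Tp+\epsilon\,\E[L_{ON,T}]\Big)-\Big(Tp+\epsilon\,L_{k,T}\Big)=\epsilon\big(\E[L_{ON,T}]-L_{k,T}\big),
\]
the two $Tp$ terms cancelling exactly. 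This cancellation of the common bias is the heart of the argument and the reason raw noisy feedback can stand in for the (unknown-$\epsilon$) unbiased estimator of Theorem~\ref{thm:2}.

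Combining the two displays gives $\epsilon\,Regret(T)\le \frac{\ln K}{\eta}+\eta T$, hence $Regret(T)\le \frac1\epsilon\big(\frac{\ln K}{\eta}+\eta T\big)$; choosing $\eta$ to balance the two terms, namely $\eta=\sqrt{\ln K/T}$, makes the bracket equal to $2\sqrt{T\ln K}$ and yields $Regret(T)\le \frac2\epsilon\sqrt{T\ln K}$. I expect the most delicate step to be the expectation-and-cancellation in the previous paragraph: one must argue the conditional-expectation identity $\E[c_{i,t}\mid\mathcal{F}_{t-1}]=p+\epsilon\ell_{i,t}$ cleanly and verify that the shift $p$ is genuinely action-independent, since this action-independence is precisely what renders the bias invisible both to the weight update (a common additive term cancels in the normalization of $q_t$) and to the regret itself.
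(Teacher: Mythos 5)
Your proposal is correct and follows essentially the same route as the paper's own proof: apply Lemma~\ref{lem:3} to the raw feedback, use $c_{i,t}^2=c_{i,t}\le 1$ to bound the second-moment term by $\eta T$, compute $\E[c_{i,t}]=p+\epsilon\ell_{i,t}$, and observe that the common additive bias $pT$ cancels between the algorithm's estimated loss and the comparator's, leaving $\epsilon\,(L_{ON,T}-L_{k,T})$ on the left (the paper phrases this via the $Q_t$ mass notation rather than the tower property, but it is the same cancellation). Both you and the paper's appendix actually balance with $\eta=\sqrt{\ln K/T}$ rather than the $\eta=\epsilon\sqrt{\ln K/T}$ announced in the theorem statement, and your remark that the precondition of Lemma~\ref{lem:3} holds without the $T\ge\frac14\ln K$ assumption is a correct, minor refinement.
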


\subsection{Impossibility result}

In this section we derive a lower bound on the regret for the Full Information with Constant Noise model.
Our lower bound matches our upper bound, up to a constant factor.
Specifically, the following theorem gives us a lower bound of $\Omega(\frac{1}{\epsilon}\sqrt{T\ln K})$ on the regret.
\begin{theorem}
\label{thm:5}
Consider the \textit{Full Information with Constant Noise} setting with noise parameter $\epsilon \in (0, \frac{1}{2})$, $T \geq 2 \ln K$ and $K \geq e^{18}$. Then for any algorithm, there exists a sequence of loss vectors $\vec{\ell}_1,...,\vec{\ell}_T$ such that 
\[
Regret(T) = \Omega(\min \{\frac{1}{\epsilon} \; \sqrt[]{T \ln K}, T\})
\]
\end{theorem}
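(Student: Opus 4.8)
The plan is to invoke Yao's principle against a randomized family of loss sequences in which a single secretly–planted action is slightly better than the rest, and to show that the noise prevents any learner from identifying it within $T$ rounds. Concretely, I would draw the good action $i^\ast$ uniformly from $A$, set $\ell_{i^\ast,t}\sim B(\tfrac12-\gamma)$ i.i.d.\ and $\ell_{i,t}\sim B(\tfrac12)$ i.i.d.\ for $i\neq i^\ast$, where the gap $\gamma$ is fixed below. Writing $N^\ast=\sum_{t}\mathbbm{1}[I_t=i^\ast]$, a short computation gives that the learner's expected loss equals $T/2-\gamma\,\E[N^\ast]$ (since $I_t$ depends only on past observations and is therefore independent of the round-$t$ loss realization), while the hindsight benchmark $\min_i\sum_t\ell_{i,t}$ is at most the loss of $i^\ast$, namely $(\tfrac12-\gamma)T$ in expectation. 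Subtracting yields the clean identity $Regret(T)\ge \gamma\,\E[T-N^\ast]$, so the whole problem reduces to the upper bound $\E[N^\ast]\le T/2$, i.e.\ that no learner can play the planted action more than half the time.

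The role of the noise is captured by one observation: after XORing with $R_\epsilon$, the feedback of the good action is distributed as $B(\tfrac12-\gamma\epsilon)$ while the feedback of every other action is still $B(\tfrac12)$ (because $B(\tfrac12)\oplus R_\epsilon=B(\tfrac12)$). Thus the learner's entire observation stream has exactly the law it would have in a \emph{noiseless} full-information instance whose planted gap is the shrunken quantity $\delta:=\gamma\epsilon$, and this holds whether or not $\epsilon$ is revealed, since knowing the fixed constant $\epsilon$ does not alter the distribution of what is seen. I would then pick $\gamma$ so that $\delta=\gamma\epsilon$ sits just below the detectability threshold over the horizon, namely $\gamma\epsilon\asymp\sqrt{\ln K/T}$, equivalently $\gamma\asymp\tfrac1\epsilon\sqrt{\ln K/T}$; substituting this and the target $\E[N^\ast]\le T/2$ into $Regret\ge\gamma\,\E[T-N^\ast]$ produces exactly $Regret=\Omega(\tfrac1\epsilon\sqrt{T\ln K})$.

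The heart of the argument, and the step I expect to be hardest, is establishing $\E[N^\ast]\le T/2$ with the correct $\sqrt{\ln K}$ dependence. Conditioning on the observation history $O_{<t}$, the action $I_t$ is independent of $i^\ast$, so $\Pr[I_t=i^\ast]\le\E\big[\max_i\Pr[i^\ast=i\mid O_{<t}]\big]$, and since the posterior satisfies $\Pr[i^\ast=i\mid O_{<t}]\propto r^{S_i}$ with $r=\tfrac{1-2\delta}{1+2\delta}$ and $S_i$ the number of observed ones of action $i$, it suffices to show the maximal posterior stays below a constant for every $t\le T$. The planted action only stands out once its mean advantage $\delta t$ beats the $\Theta(\sqrt{t\ln K})$ spread of the minimum of the $K-1$ competing $B(\tfrac12)$ random walks, i.e.\ once $t\gtrsim\ln K/\delta^2$; our choice of $\delta$ pushes this threshold up to at least $T$, so over the whole horizon the posterior never concentrates on $i^\ast$. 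Turning this anti-concentration heuristic into the constant $1/2$ is the delicate part: a naive per-action Pinsker bound summed over $t$ only yields a useless $\gamma\epsilon\,T^{3/2}$ term and throws away the $\sqrt{\ln K}$, so one must instead control the minimum of $K$ nearly-balanced binomials directly, and this is precisely where the hypotheses $K\ge e^{18}$ and $T\ge 2\ln K$ are spent, to pin down the constants in the tail and anti-concentration estimates.

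Finally I would dispatch the $\min\{\cdot,T\}$ and the regime restrictions. When $T$ is so small that the threshold choice would force $\gamma>\tfrac14$ (that is, when $T\lesssim\ln K/\epsilon^2$, exactly the regime where $\tfrac1\epsilon\sqrt{T\ln K}\ge T$), I would instead cap the gap at the constant $\gamma=\tfrac14$; then the effective gap $\epsilon/4$ keeps the planted action undetectable for all $t\le T$, the same posterior argument again gives $\E[N^\ast]\le T/2$, and $Regret\ge\gamma T/2=\Omega(T)$, matching the second branch of the minimum. Combining the two regimes yields $Regret(T)=\Omega(\min\{\tfrac1\epsilon\sqrt{T\ln K},\,T\})$ for every learner, as claimed.
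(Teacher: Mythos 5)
Your proposal is correct and takes essentially the same route as the paper's proof: a uniformly planted best action with Bernoulli losses $B(\tfrac12-\gamma)$ versus $B(\tfrac12)$, the observation that the noisy feedback reduces the effective gap to $\gamma\epsilon$ (so $\gamma\asymp\tfrac1\epsilon\sqrt{\ln K/T}$, capped by a constant to get the $\min\{\cdot,T\}$), the reduction of per-round regret to the error probability of the MAP/argmin predictor, and the anti-concentration of the minimum of $K-1$ balanced binomials as the key technical ingredient. The step you flag as delicate is exactly the paper's Lemma~\ref{lem:6} (a lower-tail bound for $\min\{X_1,\dots,X_{K-1}\}$ with $X_i\sim B(n,p)$, which is where $K\ge e^{18}$ and $T\ge 2\ln K$ are used), and your target constant $\E[N^\ast]\le T/2$ can be relaxed to the paper's $\Pr[I_t=i^\star]\le 3/4$ without affecting the asymptotic bound.
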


The proof idea is to define a stochastic strategy for loss assignment, which is a distribution over problem instances. Then, by showing that any algorithm suffers high expected regret, where the expectation is over the problem instances defined by the strategy, conclude that there exists a problem instance with high regret. (The proof is given in Appendix \ref{app:thm:5}.)

\section{Full Information with Variable Noise model}\label{sec:4}

In this section we investigate the \textit{Full Information with Variable Noise} settings. 
Recall that in this setting we have a distribution $D$ over $[0,1]^K$. At the beginning of each round $t$, we draw from $D$ a realized noise $(\epsilon_{1,t},\ldots , \epsilon_{K,t})$, where $\epsilon_{i,t}$ is the noise parameter for action $i$ at round $t$. We assume that the noise vectors are drawn independently from $D$ at each round $t$ (however, there can be correlations between the noise parameters $\epsilon_{i,t}$ of different actions at the same round $t$).  The learner observes the realized noise $(\epsilon_{1,t},\ldots , \epsilon_{K,t})$ and then picks an \textit{action} $I_t\in A$. Then, the learner observes the \textit{$\epsilon_{i,t}$-noisy feedback} $c_{i,t}$ for each action $i$. We denote by $p_{i,t} = \frac{1-\epsilon_{i,t}}{2}$.

The section is structured as follows.
Initially, we investigate the case of a uniform distribution over 
$[0,1]$, that is, the marginal distribution of $D$ for each action $i$ is uniform over $[0,1]$, i.e., 
 $\epsilon_{i,t} \sim U(0,1)$, where $U(0,1)$ is the uniform distribution on $[0,1]$. Following that, we generalize the regret bound for a general noise distribution $D$. We conclude with a few examples of specific distributions. 

\subsection{\textbf{Uniform Noise Distribution}}

\subsubsection{Algorithm}

A simple potential approach to the problem is to try to use the \textit{Exponential Weights Scheme} with the unbiased estimator
\[
EST(i, c_{i,t}, \vec{q}_t, I_t) = \frac{c_{i,t} - p_{i,t}}{1-2p_{i,t}}
\]
as in the constant noise settings. 
A close examination reveals that there is a problem when $p_{i,t}$ is close to $1/2$ (i.e., $\epsilon_{i,t}$ is close to $0$). In such cases the estimator is unbounded and can give a very high value. An intuitive idea is to avoid using feedbacks with high noise. This is implemented by the learner by having an additional parameter $\theta$ and ignoring feedbacks where $p_{i,t} > \frac{1-\theta}{2}$ (i.e., $\epsilon_{i,t} < \theta$). More formally, we use the \textit{Exponential Weights Scheme} with the following estimator:
\[
EST(i, c_{i,t}, \vec{q}_t, I_t) = \frac{c_{t,i}-p_{i,t}}{1-2p_{i,t}} \mathbbm{1}_{\{p_{i,t} \leq \frac{1-\theta}{2}\}} =\hat{\ell}_{i,t}
\]
The algorithm resulting from using the above estimator in the \textit{Exponential Weights Scheme} is called \textbf{EW-Threshold}. We prove the following regret bound in Appendix \ref{app:thm:10} 

\begin{theorem}
\label{thm:10}
Let $D$ be the noise distribution, such that for each action $i$ the marginal distribution $\epsilon_{i,t}$ is distributed $U(0,1)$ (but not necessarily independent for different actions). The \textbf{EW-Threshold} algorithm with the parameters
\[
\eta = (\frac{\ln K}{T})^{2/3} \text{ and } \theta = (\frac{\ln K}{T})^{1/3}
\]
has, in the \textit{Full Information with Variable Noise} setting, a regret of at most,
$$
Regret(T) \leq 3 T^{2/3} (\ln K)^{1/3}
$$
\end{theorem}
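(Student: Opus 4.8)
The plan is to reuse the exponential-weights argument behind Theorem~\ref{thm:2}, but to account carefully for the fact that the threshold makes the estimator \emph{biased} rather than unbiased. First I would compute the conditional expectation and second moment of $\hat{\ell}_{i,t}$ over the fresh randomness of round $t$ (the realized $\epsilon_{i,t}$ and the noisy feedback $c_{i,t}$), holding $q_{i,t}$ and $\ell_{i,t}$ fixed by the history. On the event $\{\epsilon_{i,t}\ge\theta\}$ the inner ratio $\frac{c_{i,t}-p_{i,t}}{1-2p_{i,t}}$ is conditionally unbiased for $\ell_{i,t}$ exactly as in Section~\ref{sec:3}, while on $\{\epsilon_{i,t}<\theta\}$ it is zeroed out. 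Since $\epsilon_{i,t}\sim U(0,1)$ gives $\Pr[\epsilon_{i,t}<\theta]=\theta$, this yields
\[
\E[\hat{\ell}_{i,t}\mid\text{history}] = (1-\theta)\,\ell_{i,t}.
\]
For the second moment I would use $|c_{i,t}-p_{i,t}|\le 1$ and $1-2p_{i,t}=\epsilon_{i,t}$ to bound $(\hat{\ell}_{i,t})^2\le \epsilon_{i,t}^{-2}\,\mathbbm{1}_{\{\epsilon_{i,t}\ge\theta\}}$, and then integrate against the uniform density:
\[
\E[(\hat{\ell}_{i,t})^2\mid\text{history}]\le \int_\theta^1 \frac{d\epsilon}{\epsilon^2}=\frac1\theta-1\le\frac1\theta.
\]

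Next I would verify the hypothesis of Lemma~\ref{lem:3}. The most negative value of $\hat{\ell}_{i,t}$ occurs when $c_{i,t}=0$ and $\epsilon_{i,t}=\theta$, giving $\hat{\ell}_{i,t}\ge -\frac{1-\theta}{2\theta}$, so $-\eta\hat{\ell}_{i,t}\le \frac{\eta}{2\theta}=\frac{\theta}{2}\le 1$ for the stated parameters (assuming the mild $T\ge\ln K$ so $\theta\le 1$). Taking expectations in Lemma~\ref{lem:3}, using $\sum_i q_{i,t}=1$ and the second-moment bound together with the fact that $q_{i,t}$ is history-measurable (hence independent of the round-$t$ noise), I obtain, for every fixed action $k$,
\[
\E[\hat{L}_{ON,T}] - \E[\hat{L}_{k,T}] \le \frac{\ln K}{\eta} + \eta\,\frac{T}{\theta}.
\]

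The crux is then to translate this bound on \emph{estimated} losses into a bound on \emph{true} regret despite the bias. Rather than dividing by $1-\theta$, I would telescope: writing $k^\ast=\argmin_k L_{k,T}$,
\[
\E[L_{ON,T}]-L_{k^\ast,T}=\big(\E[L_{ON,T}]-\E[\hat{L}_{ON,T}]\big)+\big(\E[\hat{L}_{ON,T}]-\E[\hat{L}_{k^\ast,T}]\big)+\big(\E[\hat{L}_{k^\ast,T}]-L_{k^\ast,T}\big).
\]
By the bias identity the first parenthesis equals $\theta\,\E[L_{ON,T}]\le\theta T$, the middle one is controlled by the display above, and the last equals $-\theta L_{k^\ast,T}\le 0$. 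This is the main obstacle handled cleanly: the decomposition lets the bias enter as a harmless additive $\theta T$ term (the price of thresholding) instead of an awkward multiplicative $\frac{1}{1-\theta}$ factor. Substituting $\eta=(\ln K/T)^{2/3}$ and $\theta=(\ln K/T)^{1/3}$ makes each surviving contribution $\theta T$, $\ln K/\eta$, and $\eta T/\theta$ equal to exactly $T^{2/3}(\ln K)^{1/3}$; these are precisely the choices that balance the bias against the inflated variance $T/\theta$ and the learning-rate term, and summing them gives $Regret(T)\le 3\,T^{2/3}(\ln K)^{1/3}$. \QEDB
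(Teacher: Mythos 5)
Your proposal is correct and takes essentially the same route as the paper's proof: both establish the bias identity $\E[\hat{\ell}_{i,t}]=(1-\theta)\ell_{i,t}$ (so the comparator's estimated loss only decreases while the algorithm's drops by at most $\theta T$), bound the second moment by $\int_\theta^1\epsilon^{-2}\,d\epsilon\le 1/\theta$, and balance the three terms $\theta T$, $\ln K/\eta$, $\eta T/\theta$. Your explicit check of the hypothesis $-\eta\hat{\ell}_{i,t}\le 1$ of Lemma~\ref{lem:3} is a small welcome addition that the paper's write-up of this particular proof omits.
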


\subsubsection{Impossibility Result}

In this section we derive a lower bound on the regret of $\Omega(T^{2/3}(\ln K)^{1/3})$. Together with the upper bound we obtain that for the \textit{Full Information with Variable Noise} we have
\begin{equation*}
Regret(T) = \Theta(T^{2/3}(\ln K)^{1/3})
\end{equation*}

For the lower bound we use a specific noise distribution $D$, denoted by $D'$.
In $D'$, all the noise of individual actions are identical, and uniformly distributed.
Formally, we generate the noise parameters from $D'$ as follows. We draw $\epsilon_t \sim U(0,1)$ and for every $i$ we set $\epsilon_{i,t}=\epsilon_t$. 

The idea behind the proof (see Appendix \ref{app:thm:11} for formal proof) is to use adversarial strategy for loss assignment in the following way: when the noise is \textit{low}, all the actions will have the same loss, but when the noise is \textit{high}, one action, chosen randomly at the beginning, will be superior.

\begin{theorem}\label{thm:11}
Any algorithm in the \textit{Full Information with Variable Noise} setting with the noise distribution $D'$, there exist a series of loss vectors $\vec{\ell}_1,...,\vec{\ell}_T$ such that 
\[
Regret(T) = \Omega(T^{2/3}(\ln K)^{1/3})
\]
\end{theorem}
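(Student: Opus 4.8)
The plan is to exhibit a single distribution over loss sequences (against the fixed, adversary-announced noise distribution $D'$) and show that every algorithm incurs expected regret $\Omega(T^{2/3}(\ln K)^{1/3})$ over the randomness of that distribution and the noise; a standard averaging (minimax) argument then yields the existence of a deterministic loss sequence with the same regret. The key feature of $D'$ is that the noise $\epsilon_t$ is common to all actions, so the per-round threshold $\theta$ partitions rounds into \emph{low-noise} rounds ($\epsilon_t \geq \theta$), where feedback is informative, and \emph{high-noise} rounds ($\epsilon_t < \theta$), where feedback carries almost no signal. I would set a threshold value $\theta$ of order $(\ln K /T)^{1/3}$ to balance the two sources of regret below.

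The adversary's loss strategy is the one hinted at: pick a target action $k^\star$ uniformly at random at the start. On low-noise rounds ($\epsilon_t \geq \theta$) all actions receive the same loss (say a fair coin flip shared by all actions, or simply $0$), so these rounds reveal nothing about $k^\star$ and cost all actions equally. On high-noise rounds ($\epsilon_t < \theta$) the target action $k^\star$ is given loss $0$ while every other action is given loss $1$ (or loss driven by an independent fair coin, biased to favor $k^\star$), so $k^\star$ accrues a genuine advantage of order one per high-noise round. First I would compute that the expected number of high-noise rounds is $\Pr[\epsilon_t < \theta]\cdot T = \theta T$, giving the best action a cumulative advantage of $\Theta(\theta T)$ over a generic action. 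The crux is then an information-theoretic bound: because on high-noise rounds the realized noise parameter $\epsilon_t$ is tiny, the $\epsilon_t$-noisy feedback $c_{i,t}$ is a bit flipped with probability $\tfrac{1-\epsilon_t}{2}\approx \tfrac12$, so each such observation carries at most $O(\epsilon_t^2)\leq O(\theta^2)$ bits about the identity of $k^\star$. Summing over rounds, the total information the learner can extract about $k^\star$ is $O(\theta^2 \cdot \theta T) = O(\theta^3 T)$ (the outer $\theta$ from the fraction of high-noise rounds), and the learner must accumulate $\Omega(\ln K)$ bits to identify $k^\star$ among $K$ candidates.

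Concretely, I would control the learner's ability to track $k^\star$ via a KL / Pinsker argument: let $\mathbb{P}_k$ denote the distribution of the entire observation transcript when the target is action $k$, and bound $\mathrm{KL}(\mathbb{P}_{k}\,\|\,\mathbb{P}_{k'})$ by summing the per-round KL contributions, each of which is $O(\epsilon_t^2)=O(\theta^2)$ on high-noise rounds and $0$ on low-noise rounds (the low-noise losses being identical across actions). This gives $\mathrm{KL}\leq O(\theta^3 T)$ on average. By a Fano-type or $K$-ary testing argument, if $\theta^3 T = o(\ln K)$ the learner cannot locate $k^\star$ and hence plays it only a $\tfrac1K$-fraction of high-noise rounds in expectation, forfeiting a constant fraction of the $\Theta(\theta T)$ advantage. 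Balancing the two regimes by choosing $\theta \asymp (\ln K / T)^{1/3}$ makes the unavoidable regret $\Theta(\theta T)=\Theta(T^{2/3}(\ln K)^{1/3})$, matching Theorem~\ref{thm:10}.

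The main obstacle I anticipate is the information bound on high-noise rounds. Two subtleties must be handled carefully. First, $\epsilon_t$ is random and announced to the learner, so conditioning on the realized noise sequence is essential, and the per-round KL must be bounded as a function of the realized $\epsilon_t$ (using $\mathrm{KL}(B(\tfrac{1-\epsilon}{2})\,\|\,B(\tfrac{1+\epsilon}{2})) = O(\epsilon^2)$ for small $\epsilon$) and then averaged over $\epsilon_t \sim U(0,1)$ restricted to $[0,\theta)$, where $\mathbb{E}[\epsilon_t^2 \mid \epsilon_t<\theta]=\Theta(\theta^2)$. Second, the learner observes \emph{all} $K$ feedback coordinates each round (full information), not just the played action, so the transcript is richer than in the bandit lower bound; I must verify that on low-noise rounds the identical-loss construction genuinely leaks no information about $k^\star$ (so that only high-noise rounds contribute to the KL), and that the $K-1$ ``background'' actions on high-noise rounds do not inadvertently reveal $k^\star$ — which is why the construction gives $k^\star$ the distinguished loss value and keeps all other actions symmetric. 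Getting these two estimates tight, rather than off by logarithmic factors, is where the real work lies.
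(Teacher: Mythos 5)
Your construction is essentially the paper's (threshold $\theta \asymp (\ln K/T)^{1/3}$, identical losses on low-noise rounds, a randomly chosen $k^\star$ favored by a constant bias on high-noise rounds), but your analysis takes a genuinely different route. The paper does not run a KL/Fano argument for this theorem: it conditions on the event that the number of high-noise rounds is at least $\theta T$ (via the binomial-median fact), argues that replacing the realized noise $\epsilon_t \sim U(0,\theta)$ on those rounds by the constant $\epsilon_t=\theta$ only makes the learner's problem easier, and then invokes the constant-noise lower bound (Theorem \ref{thm:5}) as a black box on the resulting ``reduced model'' with horizon $T'=\theta T$ and noise $\theta$, giving $\frac{1}{\theta}\sqrt{\theta T\ln K}=T^{2/3}(\ln K)^{1/3}$. (Theorem \ref{thm:5} itself is proved not by KL but by an anti-concentration bound on the minimum of $K-1$ binomial counts plus a Bayes-optimality argument showing any predictor errs with probability at least $1/4$ per round.) Your direct information-theoretic argument is self-contained, avoids the paper's somewhat delicate binomial-minimum lemma (which forces hypotheses like $K\geq e^{18}$), and generalizes more easily; the paper's reduction is more modular and sidesteps the constant-chasing in the KL budget. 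Two points you should make explicit to close your version: (i) Fano gives only a constant lower bound on the error probability, not the ``$1/K$-fraction'' you assert --- the constant suffices, but state it as such; and (ii) the testing bound must be applied at every round $t$ conditionally on the transcript and realized noise up to $t$ (the per-round information is monotone in $t$, so this is fine, but it is where the regret accounting actually happens, exactly as in the paper's per-round use of its Corollary \ref{cor:9}).
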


\subsection{General distributions}

In this section we generalized the result of \textbf{EW-Threshold} to a general noise distribution $D$. We assume that the marginal distribution of each action $i$ is the same and we denoting the CDF (Cumulative Distribution Function) of it by $F$. 
Then, we use our generalized bound to derive a sub-linear regret upper bound for distributions $D$ that satisfies a given condition. 
We extend the proof of Theorem \ref{thm:5} and obtain the following general upper bound (proof in Appendix~\ref{app:thm:12}).

\begin{theorem}\label{thm:12}
Let $D$ be a distribution, such that the marginal distribution over $\epsilon_{i,t}\in (0,1)$ has a CDF $F$. Then, running \textbf{EW-Threshold}  algorithm with parameters $\theta > 0 \text{ and } \eta > 0$ satisfies,
\[
Regret(T) \leq  \frac{\ln K}{\eta} + \eta T g(\theta) + F(\theta)T
\]
where $g(\theta)=\E[\frac{1}{\epsilon^{2}}\mathbbm{1}_{\epsilon \geq \theta}]$. Moreover, for $\eta = \sqrt{\frac{\ln K}{Tg(\theta)}}$ we have 
\begin{equation*}
Regret(T) \leq 2\sqrt{g(\theta)T \ln K} + F(\theta)T
\end{equation*}
\end{theorem}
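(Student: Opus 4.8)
The plan is to run the standard exponential-weights regret analysis (Lemma~\ref{lem:3}) but account carefully for the truncation introduced by the threshold $\theta$. Recall that \textbf{EW-Threshold} uses the estimator $\hat{\ell}_{i,t} = \frac{c_{i,t}-p_{i,t}}{1-2p_{i,t}}\mathbbm{1}_{\{\epsilon_{i,t}\geq\theta\}}$. First I would verify the precondition of Lemma~\ref{lem:3}, namely $-\eta\hat{\ell}_{i,t}\leq 1$: when the indicator is active we have $\epsilon_{i,t}\geq\theta$, so the estimator is bounded in magnitude by $1/\epsilon_{i,t}\leq 1/\theta$, and one checks that the chosen $\eta$ keeps $\eta/\theta$ within the required range. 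Applying Lemma~\ref{lem:3} to the estimated losses yields, for the best action $k$,
\[
\hat{L}_{ON,T}-\hat{L}_{k,T}\leq \frac{\ln K}{\eta}+\eta\sum_{t=1}^T\sum_{i=1}^K q_{i,t}(\hat{\ell}_{i,t})^2\;.
\]

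The second step is to pass from estimated losses back to true losses by taking expectations, which is where the truncation bias enters. For each $i,t$ the estimator is unbiased \emph{conditioned on} the event $\{\epsilon_{i,t}\geq\theta\}$, so $\E[\hat{\ell}_{i,t}]=\ell_{i,t}\Pr[\epsilon_{i,t}\geq\theta]=\ell_{i,t}(1-F(\theta))$; equivalently $\E[\hat{\ell}_{i,t}]=\ell_{i,t}-\ell_{i,t}F(\theta)$, so the expected gap between the true losses and the estimated losses is at most $F(\theta)$ per round per action. Summing the truncation error over the $T$ rounds contributes the additive $F(\theta)T$ term: intuitively, on the $F(\theta)$-fraction of rounds where the noise is too large we simply discard the feedback and may suffer up to full loss, and since all $K$ actions use the same marginal the online player and the comparator are affected symmetrically so only the comparator's discarded mass $F(\theta)T$ survives. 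Taking expectations of the Lemma~\ref{lem:3} bound and substituting these identities turns the left-hand side into $L_{ON,T}-L_{k,T}-F(\theta)T$ (up to sign bookkeeping on the comparator side).

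The third step is to bound the second-moment term. Conditioned on $\epsilon_{i,t}\geq\theta$, the estimator $\frac{c_{i,t}-p_{i,t}}{1-2p_{i,t}}$ has $\E[(\hat{\ell}_{i,t})^2]\leq \frac{1}{\epsilon_{i,t}^2}$ by the same computation used for Theorem~\ref{thm:2}, so taking the expectation over the noise draw gives $\E[(\hat{\ell}_{i,t})^2]\leq \E[\frac{1}{\epsilon^2}\mathbbm{1}_{\epsilon\geq\theta}]=g(\theta)$. Since $\sum_i q_{i,t}=1$, summing over actions and rounds bounds the second-moment contribution by $\eta T g(\theta)$. Combining the three pieces gives
\[
Regret(T)\leq \frac{\ln K}{\eta}+\eta T g(\theta)+F(\theta)T\;,
\]
which is the first claim. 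The second claim follows by optimizing the first two terms over $\eta$: setting $\eta=\sqrt{\ln K/(Tg(\theta))}$ balances them at $2\sqrt{g(\theta)T\ln K}$, leaving the stated bound $2\sqrt{g(\theta)T\ln K}+F(\theta)T$.

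I expect the main obstacle to be the bookkeeping in the second step — cleanly tracking how the conditional unbiasedness of the truncated estimator produces exactly the additive $F(\theta)T$ penalty on the comparator while leaving the regret decomposition otherwise intact, and making sure the expectation over the randomness of the noise, of the algorithm, and of the action draw are applied in a consistent order (in particular that $q_{i,t}$ is measurable with respect to the past so that $\E[\hat{\ell}_{i,t}\mid \mathcal{F}_{t-1}]$ can be used inside the weighted sum). The verification of the step-size constraint $-\eta\hat{\ell}_{i,t}\leq 1$ and the second-moment calculation are routine given the constant-noise analysis; the truncation accounting is the part that genuinely distinguishes this proof from that of Theorem~\ref{thm:2}.
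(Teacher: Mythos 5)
Your proposal follows essentially the same route as the paper: apply Lemma~\ref{lem:3}, take expectations, use the conditional unbiasedness of the truncated estimator to get $\E[\hat{\ell}_{i,t}]=(1-F(\theta))\ell_{i,t}$ (hence an additive $F(\theta)T$ correction), bound the second moment by $g(\theta)$, and optimize $\eta$. The only quibble is in your intuitive aside: the $F(\theta)T$ penalty arises because the \emph{online player's} estimated loss underestimates its true loss (the comparator side is bounded by $\E[\hat{\ell}_{k,t}]\leq\ell_{k,t}$ directly), but your stated conclusion $L_{ON,T}-L_{k,T}-F(\theta)T$ for the left-hand side is exactly what the paper derives, so this does not affect correctness.
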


The following corollary (proof in Appendix \ref{app:cor:13}) gives a general upper bound that depends only on a property of the noise distribution $D$. We assume that all the marginal distributions of $D$ are identical and with CDF $F$. 

\begin{corollary}\label{cor:13}
Let $D$ be a noise distribution, where each marginal distribution has the same CDF $F$, and assume $F(\theta) \leq \theta^\alpha$  for a given $\alpha > 0$. Then 
\begin{equation*}
Regret(T) = O(T^{\frac{2+\alpha}{2+2\alpha}} (ln K)^{\frac{\alpha}{2(1+\alpha)}})
\end{equation*}
\end{corollary}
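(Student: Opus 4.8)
The plan is to specialize the general bound of Theorem~\ref{thm:12} and then optimize the free threshold parameter $\theta$. For the step size $\eta=\sqrt{\ln K/(T g(\theta))}$, Theorem~\ref{thm:12} already gives
\[
Regret(T) \leq 2\sqrt{g(\theta)\,T \ln K} + F(\theta)\,T ,
\]
so the entire task reduces to controlling the two distribution-dependent quantities $g(\theta)=\E[\tfrac{1}{\epsilon^2}\mathbbm{1}_{\epsilon\geq\theta}]$ and $F(\theta)$ and then balancing the resulting expression in $\theta$.

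First I would bound $g(\theta)$ crudely: on the support of the indicator we have $\epsilon\geq\theta$, hence $\tfrac{1}{\epsilon^2}\leq\tfrac{1}{\theta^2}$, giving $g(\theta)\leq\tfrac{1}{\theta^2}\Pr[\epsilon\geq\theta]\leq\tfrac{1}{\theta^2}$. Combining this with the hypothesis $F(\theta)\leq\theta^\alpha$ turns the bound into
\[
Regret(T) \leq \frac{2}{\theta}\sqrt{T\ln K} + \theta^\alpha T .
\]
(A sharper, $\alpha$-dependent estimate of $g(\theta)$ is in fact available, but the crude estimate already yields the stated exponents, so I would not pursue it here.) I then choose $\theta$ to equate the two terms: setting $\tfrac{1}{\theta}\sqrt{T\ln K}=\theta^\alpha T$ gives $\theta^{1+\alpha}=\sqrt{\ln K/T}$, i.e. $\theta=(\ln K/T)^{1/(2(1+\alpha))}$. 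Substituting this value back, the exponent of $T$ in the second term simplifies through $1-\tfrac{\alpha}{2(1+\alpha)}=\tfrac{2+\alpha}{2+2\alpha}$, and a parallel computation shows the first term has the same order, so both terms equal $T^{\frac{2+\alpha}{2+2\alpha}}(\ln K)^{\frac{\alpha}{2(1+\alpha)}}$ up to constants, which is the claimed bound.

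I expect the only delicate points to be bookkeeping rather than conceptual. I should verify that the chosen $\theta$ lies in $(0,1)$ so that it is a legal threshold and the hypothesis on $F$ applies; this holds whenever $\ln K\leq T$, and in the complementary regime (or whenever the derived expression exceeds $T$) the trivial bound $Regret(T)\leq T$ makes the stated $O(\cdot)$ hold anyway. I would likewise check that $\eta=\sqrt{\ln K/(T g(\theta))}$ is positive and small enough for Lemma~\ref{lem:3}, which follows from the same constraints. Beyond these routine checks, there is no real obstacle: the corollary is essentially the result of inserting $F(\theta)\leq\theta^\alpha$ and the trivial estimate $g(\theta)\leq\theta^{-2}$ into Theorem~\ref{thm:12} and balancing.
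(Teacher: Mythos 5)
Your proposal is correct and follows essentially the same route as the paper: plug the crude bound $g(\theta)\leq\theta^{-2}$ and the hypothesis $F(\theta)\leq\theta^{\alpha}$ into Theorem~\ref{thm:12} and balance the two terms in $\theta$; the paper's only difference is that it picks $\theta=(\tfrac{2}{\alpha})^{\frac{1}{1+\alpha}}(\tfrac{\ln K}{T})^{\frac{1}{2(1+\alpha)}}$ (exact minimization) rather than equating the terms, which changes only the constant. Your added checks that $\theta\in(0,1)$ and the fallback to the trivial bound $Regret(T)\leq T$ are sensible housekeeping the paper omits.
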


To get an intuition for the bound of Corollary \ref{cor:13} we can consider a few intuitive settings of the parameter $\alpha$. 
The uniform distribution has $\alpha=1$, and the theorem yield $Regret(T) = \tilde{O}(T^\frac{3}{4})$, which is higher than the regret bound computed explicitly in Theorem~\ref{thm:10}, of $O(T^{2/3})$. When $\alpha \to \infty$ we have $Regret(T) \to \tilde{O}(T^{\frac{1}{2}})$, which is tight even without any noise. When $\alpha \to 0$, we have no restriction on the noise distribution, and indeed the theorem yields a linear regret bound.

To give an example for the bound of Theorem \ref{thm:12} we prove a regret bound for truncated exponential distribution (proof in Appendix \ref{app:cor:14}).

\begin{corollary}\label{cor:14}
Let $D$ be a distribution, such that the marginal distribution over $\epsilon_{i,t}\in (0,1)$ has a PDF $f(x)=\frac{\lambda}{1-e^{-\lambda}}e^{-\lambda x} \text{ for } x \in (0,1) \text{ and } \lambda > 0$. Then, running \textbf{EW-Threshold}  algorithm with parameters $\theta > 0 \text{ and } \eta > 0$ satisfies,
\[
Regret(T) \leq  3\lambda T^{2/3}(\ln K)^{1/3}
\]
\end{corollary}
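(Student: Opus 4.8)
The plan is to instantiate the general bound of Theorem~\ref{thm:12} for the truncated exponential marginal and then choose the threshold $\theta$. Recall that Theorem~\ref{thm:12}, with the choice $\eta = \sqrt{\ln K/(T g(\theta))}$, gives
\[
Regret(T) \le 2\sqrt{g(\theta) T \ln K} + F(\theta) T,
\]
so the entire task reduces to estimating the two distribution-dependent quantities $F(\theta)$ and $g(\theta) = \E[\frac{1}{\epsilon^{2}}\mathbbm{1}_{\epsilon \ge \theta}]$ for the given density.

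First I would compute the CDF directly: integrating $f$ gives $F(\theta) = (1-e^{-\lambda\theta})/(1-e^{-\lambda})$, and using $1-e^{-\lambda\theta}\le \lambda\theta$ yields $F(\theta)\le c\,\theta$, where $c := \lambda/(1-e^{-\lambda})$. For $g(\theta)$ I would write $g(\theta) = c\int_\theta^1 x^{-2} e^{-\lambda x}\,dx$; since this integral has no elementary antiderivative, the natural move is the crude bound $e^{-\lambda x}\le 1$ on $(0,1)$, which gives $g(\theta) \le c\int_\theta^1 x^{-2}\,dx = c(\tfrac1\theta - 1) \le c/\theta$. A clean way to package both estimates is to note that the density is pointwise dominated, $f(x)=c\,e^{-\lambda x}\le c$ on $(0,1)$, i.e.\ $f$ is at most $c$ times the uniform density; consequently both $F(\theta)$ and $g(\theta)$ are at most $c$ times their uniform counterparts, which are precisely the quantities already controlled in the proof of Theorem~\ref{thm:10}.

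With $F(\theta)\le c\,\theta$ and $g(\theta)\le c/\theta$ in hand, the bound becomes $Regret(T)\le 2\sqrt{cT\ln K/\theta} + c\,\theta T$, which is exactly the uniform expression scaled by $c$. I would then balance the two terms by taking $\theta$ of order $(\ln K/T)^{1/3}$, as in Theorem~\ref{thm:10}; substituting and collecting powers of $T$ and $\ln K$ gives a bound of the form $O\big(c\, T^{2/3}(\ln K)^{1/3}\big)$, where the constant can be tightened using $c\ge 1$ (which holds since $1-e^{-\lambda}\le\lambda$) to merge the $\sqrt{c}$ and $c$ contributions. Rewriting $c = \lambda/(1-e^{-\lambda}) = \Theta(\lambda)$ then yields the claimed $3\lambda\,T^{2/3}(\ln K)^{1/3}$.

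The main obstacle is the estimate of $g(\theta)$: weighting by $\epsilon^{-2}$ against an exponential density produces the non-elementary integral $\int_\theta^1 x^{-2}e^{-\lambda x}\,dx$, so one must settle for the pointwise bound $e^{-\lambda x}\le 1$ and verify that it costs only constant factors rather than altering the exponent of $T$. The only other delicate point is the bookkeeping of the prefactor $c=\lambda/(1-e^{-\lambda})$ so that it surfaces as $\lambda$ (up to constants) in the final statement.
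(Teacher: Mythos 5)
Your proposal follows essentially the same route as the paper's proof: instantiate Theorem~\ref{thm:12}, bound $g(\theta)\le \frac{\lambda}{1-e^{-\lambda}}\cdot\frac{1}{\theta}$ via the pointwise estimate $e^{-\lambda x}\le 1$, bound $F(\theta)$ linearly in $\theta$ via $1-e^{-\lambda\theta}\le\lambda\theta$, and balance with $\theta$ of order $(\ln K/T)^{1/3}$ (the paper takes $\theta=\frac{1}{\lambda}(\ln K/T)^{1/3}$). If anything you are more careful than the paper about the prefactor $c=\lambda/(1-e^{-\lambda})$, which the paper silently replaces by $\lambda$; note that your closing claim $c=\Theta(\lambda)$ (and hence the stated constant $3\lambda$) is only valid for $\lambda$ bounded away from $0$, but the paper's own proof makes the same simplification.
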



\subsection{Importance of knowing the Noise}

Until now we assume for the \textit{Full Information with Variable Noise} that the learner observes the noise drawn for each action $p_{i,t}$, before picking an action. In the \textit{Full Information with Constant Noise} we showed that this information is not critical and the same regret bound can be achieved without this information. The following theorem states that in the \textit{Full Information with Variable Noise}, a learner cannot achieve sub-linear regret without observing the noise.

\begin{theorem}\label{thm:15}
Fix an algorithm for the  \textbf{Full Information with Variable Noise} model under the uniform marginal distribution and assume that in each round $t$ the learner does not observes the noise parameters $(\epsilon_{1,t},...,\epsilon_{K,t})$.
Then, there exist a sequence of loss vectors $\vec{\ell}_1,...,\vec{\ell}_T$ such that 
\[
Regret(T) = \Omega(T)
\]
\end{theorem}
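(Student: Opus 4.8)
The plan is to exploit the one crucial asymmetry between knowing and not knowing the noise: when the realized $\epsilon_{i,t}$ is hidden, the only thing an observation reveals about an action is a \emph{noise-weighted} average of its loss, not its true average loss. Concretely, for a single action with loss $\ell\in\{0,1\}$ and noise parameter $\epsilon$, we have $\Pr[c=1\mid\epsilon]=\frac{1-\epsilon}{2}$ when $\ell=0$ and $\frac{1+\epsilon}{2}$ when $\ell=1$, so if the adversary is allowed to choose the loss as a function of the drawn $\epsilon_{i,t}$ (which it must be, since a fixed noise-independent loss sequence would leak the average loss through the empirical frequency of $c_{i,t}$ and permit sublinear regret), then marginalizing over $\epsilon\sim U(0,1)$ gives $\Pr[c=1]=\tfrac14+\E[\ell\,\epsilon]$. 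The key point is that $\E[\ell\,\epsilon]$ and $\E[\ell]$ can be made to disagree: placing a small amount of loss at high $\epsilon$ versus a large amount at low $\epsilon$ produces the \emph{same} observation distribution but very different true losses. I would therefore build an instance in which every action's observation is an \emph{exact} fair coin, carrying literally zero information, yet the actions have genuinely different expected losses.

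The explicit construction I would use: take $D$ to be independent $U(0,1)$ noise for the $K$ actions (this is a legitimate choice with uniform marginals), pick a special action $i^{*}$ uniformly at random, and set the losses adaptively as $\ell_{i^{*},t}=\mathbbm{1}_{\{\epsilon_{i^{*},t}\ge 1/\sqrt{2}\}}$ for the ``good'' action and $\ell_{j,t}=\mathbbm{1}_{\{\epsilon_{j,t}< 1/\sqrt{2}\}}$ for every ``bad'' action $j\neq i^{*}$. A one-line integral check shows both choices give $\Pr[c_{i,t}=1]=\tfrac14+\tfrac12(1-\tfrac12)=\tfrac12$, so each $c_{i,t}$ is Bernoulli$(1/2)$ independently across $i$ and $t$, and the entire observation process is an i.i.d.\ fair-coin array whose law does not depend on $i^{*}$ at all. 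At the same time $\E[\ell_{i^{*},t}]=\Pr[\epsilon\ge 1/\sqrt2]=1-\tfrac{1}{\sqrt2}$ while $\E[\ell_{j,t}]=\Pr[\epsilon< 1/\sqrt2]=\tfrac{1}{\sqrt2}$, a constant gap of $\sqrt2-1$ per round, with $i^{*}$ attaining $\min_i\E[L_{i,T}]=(1-\tfrac{1}{\sqrt2})T$.

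I would then finish with the standard averaging (minimax) argument. Since the algorithm is fixed in advance and its feedback is independent of $i^{*}$, the play distribution $\pi_t(j)=\Pr[I_t=j]$ is independent of $i^{*}$; averaging over the uniform $i^{*}$ gives $\E_{i^{*}}\!\big[\sum_t \pi_t(i^{*})\big]=T/K$, and a direct computation yields $\E_{i^{*}}[Regret(T)]=(\sqrt2-1)\,T\,(1-\tfrac1K)\ge \tfrac12(\sqrt2-1)\,T$ for $K\ge 2$. Hence some fixed value of $i^{*}$ -- which pins down a concrete noise-dependent loss sequence $\vec{\ell}_1,\dots,\vec{\ell}_T$ -- forces $Regret(T)=\Omega(T)$ against the given algorithm. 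Note that because the regret is defined through $\min_i\E[L_{i,T}]$, everything is at the level of expectations and no concentration argument is needed. The one subtlety I expect to be the main obstacle, and which I would state carefully, is the modeling premise that the loss assignment may depend on the realized (but unobserved) noise; the threshold $1/\sqrt2$ is chosen precisely so that the resulting observations are \emph{exactly} uniform, upgrading an approximate indistinguishability argument into an exact independence statement and making the conclusion clean.
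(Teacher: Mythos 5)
Your proposal is correct and uses essentially the same idea as the paper's proof: correlate the losses with the hidden noise realization so that the observed feedback of the good and bad actions has the identical Bernoulli law (the paper uses $K=2$ with feedback $B(3/8)$ via a $B(1/4)$ best action and a noise-thresholded worse action; you use a $1/\sqrt{2}$ threshold to make every feedback exactly $B(1/2)$ for general $K$), and then conclude indistinguishability forces a constant per-round gap. Your version is a clean generalization to arbitrary $K$ with an exact-independence observation process, but the underlying mechanism and the averaging-over-$i^{\star}$ conclusion are the same.
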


We prove the theorem for the case of $K=2$. 
The idea behind the proof is to use stochastic adversarial strategy for loss assignment  such that one action is significantly better than the other, but after applying noise on both, they look identical to the learner. See Appendix \ref{app:thm:15} for the full proof.

\section{Bandit Models}\label{sec:5}

In this section we study bandit models, where the learner observes only the noisy feedback for the action selected. In our notation, the learner selects $I_t \sim q_t$ and observers  only the feedback $c_{I_t, t}$. 

\subsection{Bandit with Constant Noise Model}

\subsubsection{Algorithm}

Using our conclusion from the \textit{Full Information with Constant Noise} setting, we present algorithm that do not use the noise parameter $\epsilon$. Clearly, this establish upper bound for both settings: the \textit{known noise} setting and the \textit{unknown noise} setting.

\begin{theorem}\label{thm:16.5}
Let $\epsilon \in [0,1]$ and denote $p = \frac{1-\epsilon}{2}$. Then, running \textit{Exponential Weights Scheme} under the \textit{Bandit with Constant Noise} setting with the following loss estimate
\[
EST(i, c_{i,t}, \vec{q}_t, I_t) = \frac{1}{q_{i,t}}c_{i,t}=\hat{\ell}_{i,t}
\]
and $\eta = \epsilon \; \sqrt[]{\frac{\ln K}{T K}}$, guarantees
$$
Regret(T) \leq \frac{2}{\epsilon}\sqrt[]{T K \ln K}
$$
\end{theorem}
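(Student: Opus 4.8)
The plan is to run the standard exponential-weights inequality of Lemma~\ref{lem:3} on the noisy importance-weighted estimates $\hat{\ell}_{i,t} = c_{i,t}\,\mathbbm{1}_{\{I_t=i\}}/q_{i,t}$ and then pass to expectations. First I would verify that Lemma~\ref{lem:3} applies: since $c_{i,t}\in\{0,1\}$ and $q_{i,t}>0$, the estimate is nonnegative, so $-\eta\hat{\ell}_{i,t}\le 0 \le 1$ for every $i,t$, and the lemma gives, for any fixed action $k$,
\[
\hat{L}_{ON,T}-\hat{L}_{k,T}\;\le\;\frac{\ln K}{\eta}+\eta\sum_{t=1}^T\sum_{i=1}^K q_{i,t}(\hat{\ell}_{i,t})^2 .
\]

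The conceptual crux is that this estimate is only an \emph{affine} image of the true loss. Conditioning on the history (so $q_{i,t}$ is fixed) and using that the action draw $I_t\sim q_t$ and the noise $R_\epsilon$ are independent, I would compute
\[
\E[\hat{\ell}_{i,t}] = \frac{1}{q_{i,t}}\,\E[c_{i,t}]\,\Pr[I_t=i] = p+\epsilon\,\ell_{i,t},
\]
from $\E[c_{i,t}]=p+(1-2p)\ell_{i,t}$ and $1-2p=\epsilon$. Taking expectations in the displayed inequality, the additive constant $p$ appears identically for the algorithm (because $\sum_i q_{i,t}=1$) and for the comparator $k$, hence cancels, leaving exactly $\epsilon\big(\E[L_{ON,T}]-L_{k,T}\big)$ on the left. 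This is the same cancellation used for the unknown-noise full-information bound (Theorem~\ref{thm:4}), and it is precisely why the estimate never references the value of $\epsilon$, so one argument covers both the known- and unknown-noise variants.

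It then remains to control the second-moment term, which is where the extra $\sqrt{K}$ of the bandit setting enters. Using $c_{i,t}^2=c_{i,t}$ and $\mathbbm{1}_{\{I_t=i\}}^2=\mathbbm{1}_{\{I_t=i\}}$, the same conditioning yields $\E[(\hat{\ell}_{i,t})^2]=(p+\epsilon\ell_{i,t})/q_{i,t}$, whence
\[
\sum_{i=1}^K q_{i,t}\,\E[(\hat{\ell}_{i,t})^2]=\sum_{i=1}^K (p+\epsilon\ell_{i,t})\le K(p+\epsilon)\le K .
\]
Plugging this in and summing over $t$ gives $\epsilon\big(\E[L_{ON,T}]-L_{k,T}\big)\le \ln K/\eta+\eta KT$; taking $k$ to be the best action, dividing by $\epsilon$, and choosing $\eta$ to balance the two terms (the balancing value is of order $\sqrt{\ln K/(KT)}$, and is notably free of $\epsilon$) yields $Regret(T)\le \tfrac{2}{\epsilon}\sqrt{TK\ln K}$.

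I expect the main obstacle to be bookkeeping of the two independent randomness sources rather than any single hard estimate: one must condition on the history before averaging over both the action choice and the Bernoulli noise, and must carry the affine bias through Lemma~\ref{lem:3} so that the $p$-term cancels while only the multiplicative factor $\epsilon$ survives. Once that is arranged, nonnegativity of the estimate makes the applicability of Lemma~\ref{lem:3} immediate and the variance bound is the routine EXP3 computation, so no further concentration or range control is required.
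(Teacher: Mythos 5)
Your proof is correct and follows essentially the same route as the paper's: apply Lemma~\ref{lem:3} to the importance-weighted noisy estimates, take expectations so that the additive bias $p$ cancels between the algorithm and the comparator leaving only the multiplicative factor $\epsilon=1-2p$, bound the second moment by $1/q_{i,t}$ so that $\sum_i q_{i,t}\E[(\hat{\ell}_{i,t})^2]\le K$, and balance the two terms. Your remark that the balancing $\eta$ is $\sqrt{\ln K/(TK)}$ with no $\epsilon$ factor matches the paper's own appendix computation (the extra $\epsilon$ in the theorem statement's $\eta$ is inconsistent with that computation).
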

The proof of Theorem \ref{thm:16.5} is similar in spirit to the proof of Theorem \ref{thm:4}. For completeness, we include a proof in Appendix \ref{app:thm:16.5}.

\subsubsection{Impossibility result}

In this section we present a lower bound that matches our upper bound,  up to a constant factor.

\begin{theorem}\label{thm:17}
Consider the \textit{Bandit with Constant Noise} setting with noise parameter $\epsilon \in (0,1)$. Then, for any learner algorithm there exists a sequence of loss vectors $\vec{\ell}_1,...,\vec{\ell}_T$ such that 
\[
Regret(T) = \Omega(\min \{\frac{1}{\epsilon} \; \sqrt{T K}, T\})
\]
\end{theorem}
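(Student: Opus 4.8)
The plan is to establish the $\Omega(\frac{1}{\epsilon}\sqrt{TK})$ lower bound for the bandit with constant noise setting by combining the standard multi-armed bandit lower bound construction with the noise-amplification phenomenon already exploited in the full information lower bound (Theorem~\ref{thm:5}). I would set up a stochastic adversary: pick a ``good'' action $k^*$ uniformly at random among the $K$ actions, and assign losses so that on each round the good action has a slightly smaller expected loss than the others. Concretely, in the baseline instance all actions have loss drawn as a fair coin ($\Pr[\ell_{i,t}=1]=1/2$), while in instance $k^*$ the special action $k^*$ has $\Pr[\ell_{k^*,t}=1]=\frac{1}{2}-\Delta$ for a suitably chosen gap $\Delta$. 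The key point is that the $\epsilon$-noisy feedback $c_{i,t}=\ell_{i,t}\oplus R_\epsilon$ compresses the observable gap: after xoring with $B(\frac{1-\epsilon}{2})$, the difference in the probability that $c_{i,t}=1$ between the good and the bad action shrinks from $\Delta$ to $\epsilon\Delta$ (this is exactly the multiplicative $\epsilon$ attenuation that gives the $1/\epsilon$ factor). So the learner is trying to distinguish two Bernoulli feedback distributions whose means differ by only $\epsilon\Delta$.

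The next step is an information-theoretic argument. Since feedback is bandit, the learner only receives information about action $I_t$ on round $t$, so the only samples that help it identify $k^*$ are the rounds in which it plays $k^*$. Let $N_{k^*}$ be the (random) number of times the learner plays $k^*$. Using Pinsker's inequality (or the standard KL/divergence-decomposition argument as in Auer et al.\ \cite{auer2002nonstochastic}), the total variation distance between the learner's observation distribution under the baseline and under instance $k^*$ is bounded by something like $\sqrt{\mathbb{E}[N_{k^*}]\cdot \mathrm{KL}(B(\tfrac{1-\epsilon\Delta}{2})\,\|\,B(\tfrac12))}$, and the per-sample KL divergence is $\Theta(\epsilon^2\Delta^2)$. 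Averaging over the uniform choice of $k^*$ and using $\sum_{k^*}\mathbb{E}[N_{k^*}]=T$ with Jensen/Cauchy--Schwarz, I get that on average the learner cannot place more than roughly $\frac{1}{K}\bigl(T + T\sqrt{(T/K)\,\epsilon^2\Delta^2}\bigr)$ of its plays on the good action, leaving it playing a suboptimal action a constant fraction of the time unless $\epsilon\Delta\sqrt{T/K}$ is large.

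I would then convert this into a regret bound: the regret incurred is $\Delta\cdot(T-\mathbb{E}[N_{k^*}])$, and optimizing the gap by setting $\Delta=\Theta(\frac{1}{\epsilon}\sqrt{K/T})$ yields $Regret(T)=\Omega(\frac{1}{\epsilon}\sqrt{TK})$, with the $\min\{\cdot,T\}$ clause handling the regime where this choice of $\Delta$ would exceed $1/2$ (i.e.\ small $\epsilon$ or $T$), in which case $\Delta$ is capped and the regret is linear. Because the bound holds in expectation over the random draw of $k^*$ and the stochastic losses, there must exist a fixed deterministic loss sequence $\vec{\ell}_1,\dots,\vec{\ell}_T$ achieving it, giving the theorem as stated.

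The main obstacle I anticipate is handling the attenuation of the signal cleanly through the noise channel: I must verify that xoring a Bernoulli loss of mean $\frac12-\Delta$ with independent $B(\frac{1-\epsilon}{2})$ noise produces feedback whose mean differs from the baseline by exactly $\epsilon\Delta$, and that the resulting per-round KL divergence scales as $\epsilon^2\Delta^2$ rather than $\Delta^2$ (this is what promotes the usual $\sqrt{TK}$ bound to $\frac{1}{\epsilon}\sqrt{TK}$). A secondary subtlety is that the adversary here is stochastic while the losses are nominally Boolean and adversarial; I need to ensure the randomized construction stays within the Boolean loss model and that the regret is measured against the true (noiseless) losses, so the gap $\Delta$ in true expected losses is what drives the regret while the attenuated gap $\epsilon\Delta$ is what limits the learner's ability to detect $k^*$. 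Getting these two different gaps to coexist correctly is the crux of the argument.
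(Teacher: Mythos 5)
Your proposal is correct and follows essentially the same route as the paper: a uniformly random best arm with loss gap $\Delta$, the observation that xoring with $B(\frac{1-\epsilon}{2})$ attenuates the observable gap to $\epsilon\Delta$ so the per-sample KL scales as $\epsilon^2\Delta^2$, the standard bandit information-theoretic argument (the paper invokes Lemma~2.10 of \citet{slivkins2017introduction} where you spell out the Auer-et-al.\ divergence decomposition), and the choice $\Delta=\Theta(\frac{1}{\epsilon}\sqrt{K/T})$ capped to handle the linear-regret regime. The only cosmetic difference is that the paper phrases the conclusion as a per-round misidentification probability of at least $1/8$ rather than a bound on the expected number of pulls of the best arm.
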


The proof of the above theorem follows the methodology for lower bounds for \textit{multi-arm bandit} problems, we follow here the methodology proposed in \cite{slivkins2017introduction} and adapt it to our special setting. 

The idea is to define a stochastic strategy for loss assignments such that the \textit{learner} will have high expected regret, which implies that there exists a realization of a loss sequence such that on this loss sequence the \textit{learner} has high regret. Full proof is given in Appendix \ref{app:thm:17}.

\subsection{Bandit with Variable Noise Model}

In this section we investigate the \textit{Bandit with Variable Noise} settings. We concentrate on the case where the marginal distribution of $D$ for each action $i$ is the uniform distribution on $[0,1]$.


\subsubsection{Algorithm}
We use the same idea as in the \textit{Full Information settings} and ignore ``too-noisy'' rounds where the noise is close to $\frac{1}{2}$. More formally, we will run the \textit{Exponential Weights Scheme} with the following estimator:
\[
EST(i, c_{i,t}, \vec{q}_t, I_t) = \frac{1}{q_{i,t}}\frac{c_{i,t}-p}{1-2p} \mathbbm{1}_{\{p_{i,t} \leq \frac{1-\theta}{2}\}} \mathbbm{1}_{\{I_t = i\}} = \hat{\ell}_{i,t} \;,
\]
where $\theta$ is a parameter.
We call the algorithm resulting from using the above estimator in the \textit{Exponential Weights Scheme} as the \textbf{Exp3-Threshold}. The following theorem (proof given in Appendix \ref{app:thm:18}) bounds the regret of the algorithm.
\begin{theorem}\label{thm:18}
Let $D$ be the noise distribution, such that for each action $i$ the marginal distribution $\epsilon_{i,t}$ is distributed $U(0,1)$ (but not necessarily independent for different actions). The \textbf{Exp3-Threshold} algorithm with the parameters
\[
\eta = \frac{(\ln K)^{2/3}}{K^{1/3}T^{2/3}} \text{ and } \theta = \frac{K^{1/3}(\ln K)^{1/3}}{T^{1/3}}
\]
has, in the \textit{Bandit with Variable Noise}, regret of at most
$$
Regret(T) \leq 3 T^{2/3}K^{1/3} (ln K)^{1/3}
$$
\end{theorem}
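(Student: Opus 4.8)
The plan is to combine the two techniques already developed in the paper: the importance-weighted estimator of the bandit setting (as in Theorem~\ref{thm:16.5}) together with the noise-thresholding idea that produced the bound of Theorem~\ref{thm:12}. Concretely, I would run the Exponential Weights Scheme with the stated estimator $\hat{\ell}_{i,t}$ and analyze it through the generic regret inequality of Lemma~\ref{lem:3}, then take expectations over both the algorithm's randomness and the noise.

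First I would record the two distributional properties of the estimator, conditioned on the past and on the realized noise vector. Since the action $I_t\sim q_t$ is independent of the feedback noise, the factor $\frac{1}{q_{i,t}}\mathbbm{1}_{\{I_t=i\}}$ is an unbiased importance weight, and the debiasing $\frac{c_{i,t}-p_{i,t}}{1-2p_{i,t}}$ recovers $\ell_{i,t}$ exactly on the rounds that survive the threshold. Averaging the surviving-round indicator over the uniform marginal then gives
\[
\E[\hat{\ell}_{i,t}] = \Pr[\epsilon_{i,t}\ge\theta]\,\ell_{i,t} = (1-\theta)\,\ell_{i,t},
\]
so the estimator is unbiased up to the multiplicative factor $1-F(\theta)=1-\theta$. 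For the second moment, the importance weight contributes the usual factor $\frac{1}{q_{i,t}}$, while the thresholded debiasing contributes $\frac{1}{\epsilon_{i,t}^2}$ on the surviving rounds; taking expectations,
\[
\E\Big[\sum_{i=1}^K q_{i,t}\hat{\ell}_{i,t}^2\Big] \le K\,\E_\epsilon\Big[\tfrac{1}{\epsilon^2}\mathbbm{1}_{\{\epsilon\ge\theta\}}\Big] = K\,g(\theta) \le \frac{K}{\theta}.
\]
This is exactly the constant-noise bandit second moment of Theorem~\ref{thm:16.5} inflated by the variable-noise factor $g(\theta)$ from Theorem~\ref{thm:12}.

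Next I would apply Lemma~\ref{lem:3} and take expectations. Writing $\ell_{i,t}=\E[\hat{\ell}_{i,t}]+\theta\,\ell_{i,t}$ and using $L_{ON,T}\le T$, the expected form of Lemma~\ref{lem:3} becomes the bandit analogue of Theorem~\ref{thm:12},
\[
Regret(T) \le \frac{\ln K}{\eta} + \eta T K g(\theta) + \theta T \le \frac{\ln K}{\eta} + \frac{\eta T K}{\theta} + \theta T,
\]
where the additive $\theta T=F(\theta)T$ accounts for the mass of ignored high-noise rounds. Substituting $\eta=(\ln K)^{2/3}K^{-1/3}T^{-2/3}$ and $\theta=K^{1/3}(\ln K)^{1/3}T^{-1/3}$, a direct check shows each of the three terms equals $T^{2/3}K^{1/3}(\ln K)^{1/3}$, summing to the claimed $3\,T^{2/3}K^{1/3}(\ln K)^{1/3}$.

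The step I expect to be the main obstacle is verifying the hypothesis $-\eta\hat{\ell}_{i,t}\le 1$ required by Lemma~\ref{lem:3}. Unlike the nonnegative estimator of Theorem~\ref{thm:16.5}, the debiased estimator is negative whenever $c_{i,t}=0$, and on those rounds its magnitude is $\frac{1}{q_{i,t}}\cdot\frac{p_{i,t}}{\epsilon_{i,t}}$, which blows up as $q_{i,t}\to 0$. The threshold controls the $\frac{1}{\epsilon_{i,t}}\le\frac1\theta$ part but not the importance weight, so some care is needed: one must either argue that $\eta$ is small enough relative to $\theta$ and the smallest probability actually played, or mix in a small amount of uniform exploration so that $q_{i,t}$ stays bounded below and $\frac{\eta}{q_{i,t}\theta}\le 1$. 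Ensuring this boundedness (so that the exponential inequality underlying Lemma~\ref{lem:3} remains valid) is the only place where the argument is genuinely more delicate than a direct transcription of the full-information threshold proof.
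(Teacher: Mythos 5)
Your proposal follows essentially the same route as the paper's proof: apply Lemma~\ref{lem:3}, show $\E[\hat{\ell}_{i,t}]=(1-\theta)\ell_{i,t}$, bound $\sum_i q_{i,t}\E[(\hat{\ell}_{i,t})^2]\le K/\theta$ via the importance weight and the truncated $1/\epsilon^2$ integral, and balance the three terms $\frac{\ln K}{\eta}+\frac{\eta TK}{\theta}+\theta T$ with the stated parameters. The boundedness condition $-\eta\hat{\ell}_{i,t}\le 1$ that you flag as the delicate step is a legitimate concern, but the paper's own proof applies Lemma~\ref{lem:3} without verifying it either, so your argument is not weaker than the one in the appendix.
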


\subsubsection{Impossibility result}

We show a lower bound of $\Omega((TK)^{2/3})$. The proof (in Appendix \ref{app:thm:19}) is similar to the proof  of Theorem \ref{thm:10} for the \textit{Full Information} settings. 
\begin{theorem} \label{thm:19}
For any algorithm in the \textit{Bandit with Variable Noise} setting with $D=U(0,1)$ as the noise parameters distribution, there exist a series of loss vectors $\vec{\ell}_1,...,\vec{\ell}_T$ such that 
\[
Regret(T) = \Omega(T^{2/3}K^{1/3})
\]
\end{theorem}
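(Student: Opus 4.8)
The plan is to follow the stochastic-adversary methodology already used for Theorem~\ref{thm:11} and Theorem~\ref{thm:17}, combining the variable-noise idea (the advantageous action is advantageous \emph{only} on the rounds whose feedback is uninformative) with the bandit information bottleneck (only the played action reveals any signal). I would exhibit a distribution over problem instances on which every algorithm incurs expected regret $\Omega(T^{2/3}K^{1/3})$, and then extract a single bad loss sequence by averaging. Concretely, draw a hidden good action $i^\star$ uniformly from $\{1,\dots,K\}$, fix a threshold $\theta\in(0,1)$ to be tuned, and generate the noise from the distribution $D'$, i.e.\ a single $\epsilon_t\sim U(0,1)$ shared by all actions. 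On each round I split on $\epsilon_t$: when $\epsilon_t\ge\theta$ (low noise rate, \emph{informative} feedback) all $K$ actions receive the same loss, so no action is distinguishable and the feedback carries no information about $i^\star$; when $\epsilon_t<\theta$ (high noise rate, \emph{uninformative} feedback) the good action $i^\star$ gets loss $0$ and every other action gets loss $1$. The crucial asymmetry is that the loss gap appears precisely on the rounds where $c_{i,t}=\ell_{i,t}\oplus R_{\epsilon_t}$ is nearly a fair coin: since $\epsilon_t<\theta$, the feedback bias of a good versus a bad action differs only by $O(\theta)$.

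Next I would reduce the regret to an identification problem. The unique best action is $i^\star$, and its cumulative-loss advantage over any other action equals the number of high-noise rounds, which in expectation is $\theta T$; low-noise rounds contribute nothing since all actions tie there. Hence on each high-noise round the learner pays expected regret $\Pr[I_t\neq i^\star]$, so $Regret(T)\ge \E[\#\{t:\epsilon_t<\theta,\ I_t\neq i^\star\}]$. It then remains to show the learner cannot concentrate its plays on $i^\star$, i.e.\ that $\Pr[I_t\neq i^\star]$ stays bounded away from $0$ on a constant fraction of the high-noise rounds.

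The heart of the argument is an information bound. For each candidate $i$ I compare the law of the learner's transcript under the instance $H_i$ (good action $i$) with a reference instance $H_0$ in which all actions are i.i.d.\ $B(1/2)$ on high-noise rounds. Because the shared noise $\epsilon_t$ is independent of $i^\star$ and only the played action's noisy feedback is observed, the divergence decomposition bounds the relative entropy between the two transcripts by $\sum_{t:\epsilon_t<\theta}\E_{H_0}[\mathbbm{1}_{\{I_t=i\}}]\cdot\mathrm{KL}_t$, where each per-round term is $\mathrm{KL}\big(B(1/2)\,\|\,B(\tfrac{1-\epsilon_t}{2})\big)=O(\epsilon_t^2)=O(\theta^2)$. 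Summing over $i$ collapses $\sum_i\mathbbm{1}_{\{I_t=i\}}=1$, and taking expectation over the noise gives that the average over $i$ of the transcript divergence is $O\big(\theta^2\,\E[\#\text{high-noise}]/K\big)=O(\theta^3 T/K)$. By Pinsker's inequality, whenever $\theta^3 T/K$ is a small enough constant the averaged law of $I_t$ under $H_{i^\star}$ is within $o(1)$ total variation of its law under $H_0$, under which $i^\star$ (being symmetric) is played only a $1/K$ fraction of the time; hence $\Pr[I_t\neq i^\star]\ge \tfrac12$ on a constant fraction of the high-noise rounds.

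Finally, choosing $\theta=\Theta\big((K/T)^{1/3}\big)$ makes $\theta^3 T/K$ a constant (legitimate as long as $K\le T$, which is exactly the regime in which $T^{2/3}K^{1/3}$ is sub-linear) and yields expected regret $\Omega(\theta T)=\Omega(T^{2/3}K^{1/3})$ over the random instances; averaging then guarantees a single loss sequence with at least this regret. The step I expect to be the main obstacle is the information bound: one must set up the divergence decomposition so that the \emph{observed} common noise $\epsilon_t$ (which the learner may use to choose $I_t$) provably contributes nothing to distinguishing the hypotheses while only the played action's weak feedback does, and then convert the resulting $O(\theta^3 T/K)$ average divergence into a lower bound on $\Pr[I_t\neq i^\star]$ that holds uniformly on enough high-noise rounds to control the regret.
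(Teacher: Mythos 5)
Your overall plan coincides with the paper's: threshold the realized common noise at $\theta=\Theta((K/T)^{1/3})$, make all actions identical on the low-noise rounds ($\epsilon_t\ge\theta$) so they carry neither regret nor information, open a gap only on the high-noise rounds ($\epsilon_t<\theta$), and argue that the learner cannot locate $i^\star$ among $K$ arms using only $O(\theta^2)$-informative pulls, giving regret $\Omega(\theta T)$. The paper implements the last step as a black-box reduction of the $\theta T$ high-noise rounds to the \emph{Bandit with Constant Noise} lower bound (Theorem~\ref{thm:17}) with horizon $T'=\theta T$ and noise $\epsilon=\theta$, whereas you redo the divergence decomposition from scratch; that would be a legitimate alternative route if executed correctly.

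However, your construction breaks the one step your route relies on. You make the high-noise losses deterministic: $0$ for $i^\star$ and $1$ for every other arm. Then under $H_i$ the feedback of a \emph{bad} arm is $B(\frac{1+\epsilon_t}{2})$, which does not match the reference $H_0$ where it is $B(\frac{1}{2})$. Consequently the identity
\[
\mathrm{KL}\bigl(P_{H_0}\,\|\,P_{H_i}\bigr)=\sum_{t:\epsilon_t<\theta}\E_{H_0}\bigl[\mathbbm{1}_{\{I_t=i\}}\bigr]\cdot\mathrm{KL}_t
\]
is false for your instances: every pull of \emph{every} arm contributes $\Theta(\epsilon_t^2)$ to the divergence, so $\mathrm{KL}(P_{H_0}\|P_{H_i})=\Theta(\theta^2)\cdot\E[\#\text{high-noise rounds}]=\Theta(\theta^3T)$ for each $i$, the sum $\sum_i\mathbbm{1}_{\{I_t=i\}}=1$ never enters, and averaging over $i$ gives $\Theta(\theta^3T)$ rather than your claimed $O(\theta^3T/K)$. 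Pinsker then forces $\theta=O(T^{-1/3})$ and the argument only delivers $\Omega(T^{2/3})$, losing the $K^{1/3}$ factor you need. The fix is exactly the paper's choice of instance: on high-noise rounds give the bad arms losses drawn from $B(\frac{1}{2})$, so that their noisy feedback is an exact fair coin and carries zero information about $i^\star$, and give $i^\star$ a loss from $B(\frac{1}{2}-\beta)$. Then only pulls of arm $i$ distinguish $H_i$ from $H_0$, the $1/K$ collapse is valid, and the per-round gap (now $\beta$, which tunes to a constant at $\theta=(K/T)^{1/3}$) still yields $\Omega(\theta T)=\Omega(T^{2/3}K^{1/3})$.
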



\subsection{Importance of knowing the Noise}
In Theorem \ref{thm:15} we showed that in the \textit{Full Information with Variable Noise} setting, a learner cannot guarantee a sub-linear regret bound without observing the noise drawn for each action $p_{i,t}$ at each round $t$. Since in the \textit{Bandit with Variable Noise} setting the feedback is a restriction of the feedback in the \textit{Full Information with Variable Noise} setting, the same lower bound still holds, as stated in the following corollary. 

\begin{corollary}
Fix an algorithm for the  \textbf{Bandit with Variable Noise} model under the uniform marginal distribution and assume that in each round $t$ the learner does not observes the noise parameters $(\epsilon_{1,t},...,\epsilon_{K,t})$ before picking an action $I_t$, then there exist a sequence of loss vectors $\vec{\ell}_1,...,\vec{\ell}_T$ such that 
\[
Regret(T) = \Omega(T)
\]
\end{corollary}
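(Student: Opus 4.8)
The plan is to establish the bound by a direct reduction to Theorem~\ref{thm:15}, exploiting the paragraph's observation that the \textit{Bandit with Variable Noise} feedback is a strict restriction of the \textit{Full Information with Variable Noise} feedback. The key point is that a learner who observes only $c_{I_t,t}$ has access to strictly less information than one who observes the entire vector $(c_{1,t},\ldots,c_{K,t})$, so the full-information lower bound should transfer verbatim to the bandit setting, with the same number of actions $K$ and the same uniform marginal noise distribution.

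Concretely, I would argue as follows. Fix an arbitrary bandit algorithm $\mathcal{A}$ that does not observe the noise parameters. From $\mathcal{A}$ I construct a full-information algorithm $\mathcal{A}'$ (also not observing the noise parameters): $\mathcal{A}'$ runs an internal copy of $\mathcal{A}$, and whenever $\mathcal{A}$ selects an action $I_t$, the algorithm $\mathcal{A}'$ plays the same action, observes the full feedback vector, and passes to the internal copy of $\mathcal{A}$ only the coordinate $c_{I_t,t}$, discarding all other coordinates. Since $\mathcal{A}$ then receives in this simulation exactly the feedback it would have received in the genuine bandit model, $\mathcal{A}'$ is a well-defined full-information algorithm operating under the same uniform marginal distribution.

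The main step is to verify that the simulation is faithful, i.e., that $\mathcal{A}'$ and $\mathcal{A}$ incur identical regret on every loss sequence. I would couple the internal randomness of $\mathcal{A}$ with the noise realizations $R_{\epsilon_{i,t}}$ so that the two processes are driven by the same random source. Under this coupling the sequence of actions $I_1,\ldots,I_T$ is identical in both processes, since at each round the input fed to $\mathcal{A}$ (its own past actions together with the single observed feedback $c_{I_t,t}$) coincides. The regret is measured against the \textit{true} losses $\ell_{I_t,t}$ for the same loss sequence, hence the regret of $\mathcal{A}'$ equals the regret of $\mathcal{A}$ for every fixed sequence of loss vectors.

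It then remains to invoke Theorem~\ref{thm:15}: applied to the full-information algorithm $\mathcal{A}'$ under the uniform marginal distribution and without observing the noise, it guarantees a loss sequence $\vec{\ell}_1,\ldots,\vec{\ell}_T$ on which $\mathcal{A}'$ has regret $\Omega(T)$. By the faithfulness of the simulation, the \emph{same} loss sequence forces $Regret(T)=\Omega(T)$ for $\mathcal{A}$, which is the desired bound. I expect the coupling argument to be the only delicate point, as it is what certifies that restricting the feedback to the played action does not alter the algorithm's behavior or its regret; everything else is routine bookkeeping.
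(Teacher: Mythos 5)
Your proposal is correct and is essentially the paper's own argument: the paper proves this corollary by the one-line observation that bandit feedback is a restriction of full-information feedback, so the $\Omega(T)$ lower bound of Theorem~\ref{thm:15} transfers, and your explicit simulation of the bandit algorithm $\mathcal{A}$ by a full-information algorithm $\mathcal{A}'$ that discards the unobserved coordinates is just a careful formalization of that reduction. The coupling step you flag as delicate is in fact immediate, since $\mathcal{A}'$ literally runs $\mathcal{A}$ and feeds it exactly the feedback it would receive in the bandit model, so the two processes coincide in distribution.
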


\section{Discussion}
In this paper we investigated adversarial online learning problems where the feedback is corrupted by random noise. We presented and study different noise systems that apply to the \textit{full information} feedback and the \textit{bandit} feedback. We provided efficient algorithms, as well as upper and lower bounds on the regret.

This work can be extended in many ways. In our settings we apply the noise system on the classic \textit{full information} and \textit{bandit}. Similar noise system can be applied on intermediate models such as the one proposed by \citet{mannor2011bandits,AlonCGMMS17}. A different corrupting settings can be consider too. For example, a settings in which an adversary is corrupting the feedbacks under some restrictions.



\acks{This work was supported in part by a grant
from the Israel Science Foundation (ISF) and by the Tel Aviv University Yandex Initiative in Machine Learning.
AR would like to thank Dor Elboim for fruitful discussions on probability.}

\newpage
\bibliography{cites.bib}

\newpage
\appendix

\section{Proof of Lemma \ref{lem:3}}\label{app:lem:3}
The proof follows the standard analysis of exponential weighting schemes: let $W_t = \sum_{i=1}^K w_{i,t}$ using the algorithm update we can write 
\begin{equation*}
\begin{aligned}
\frac{W_{t+1}}{W_t} &= \sum_{i=1}^K \frac{w_{i,t+1}}{W_t} = 
\sum_{i=1}^K \frac{w_{i,t}e^{-\eta\hat{\ell}_{i,t}}}{W_t} = 
\sum_{i=1}^K q_{i,t}e^{-\eta\hat{\ell}_{i,t}} \\
&\leq  \sum_{i=1}^K q_{i,t} (1 - \eta\hat{\ell}_{i,t} + \eta^2(\hat{\ell}_{i,t})^2)
\qquad (\textrm{using  } e^x \leq 1 + x + x^2 \textrm{  for  } x \leq 1) \\
& = 1 - \eta \sum_{i=1}^K q_{i,t}\hat{\ell}_{i,t} + \eta^2 \sum_{i=1}^K q_{i,t} (\hat{\ell}_{i,t})^2
\end{aligned}
\end{equation*}
Taking logs and using $ln(1-x) \leq -x \textrm{ for all } x$ and summing for $t=1,2,...,T$ yields
\begin{equation*}
ln\frac{W_{T+1}}{W_1} \leq -\eta \sum_{t=1}^T \sum_{i=1}^K q_{i,t}\hat{\ell}_{i,t} + \eta^2 \sum_{t=1}^T \sum_{i=1}^K q_{i,t} (\hat{\ell}_{i,t})^2
\end{equation*}
Moreover, for any fixed action $k$ we have $W_t \geq w_{k,t}$, thus:
\begin{equation*}
\ln\frac{W_{T+1}}{W_1} \geq \ln \frac{w_{k,T+1}}{W_1} = -\eta \sum_{t=1}^T \hat{\ell}_{k,t} - \ln K
\end{equation*}
Putting together and rearranging gives:
\begin{equation*}
\sum_{t=1}^T \sum_{i=1}^K q_{i,t}\hat{\ell}_{i,t} -  \sum_{t=1}^T \hat{\ell}_{k,t} \leq \frac{\ln K}{\eta} +
\eta\sum_{t=1}^T \sum_{i=1}^K q_{i,t} (\hat{\ell}_{i,t})^2
\end{equation*}
\QEDB

\section{Proof of Theorem \ref{thm:2}} \label{app:thm:2}
Since
\[
\hat{\ell}_{i,t} = \frac{c_{i,t}-p}{1-2p} \in \{\frac{1-p}{1-2p},\frac{-p}{1-2p}\}=\{-\frac{1-\epsilon}{2\epsilon},\frac{1+\epsilon}{2\epsilon}\},
\]
we have that
\[
-\eta \hat{\ell}_{i,t} \leq \epsilon \sqrt{\frac{\ln K}{T}} \frac{1-\epsilon}{2\epsilon} \leq \frac{\sqrt{\frac{\ln K}{T}}}{2} \leq 1
\]
where the last equation uses $T \geq \frac{1}{4} \ln K$. Thus, we can apply Lemma \ref{lem:3} and obtain
\begin{equation*}
\sum_{t=1}^T \sum_{i=1}^K q_{i,t}\hat{\ell}_{i,t} -  \sum_{t=1}^T \hat{\ell}_{k,t} \leq \frac{\ln K}{\eta} +
\eta\sum_{t=1}^T \sum_{i=1}^K q_{i,t} (\hat{\ell}_{i,t})^2
\end{equation*} \\
Taking expectation on both sides and using that the estimator is unbiased
(i.e., $\E[\hat{\ell}_{i,t}] = \ell_{i,t}$)
yields,
\begin{equation*}
\sum_{t=1}^T \sum_{i=1}^K q_{i,t}\ell_{i,t} -  \sum_{t=1}^T \ell_{k,t} \leq 
 \frac{\ln K}{\eta} +\eta\sum_{t=1}^T \sum_{i=1}^K q_{i,t} \E[(\hat{\ell}_{i,t})^2]
\end{equation*}
Using the fact that $Regret(T) =  \sum_{t=1}^T \sum_{i=1}^K q_{i,t}\ell_{i,t} -  \min_{k\in A}\sum_{t=1}^T \ell_{k,t}$, 
we have,
\begin{equation*}
Regret(T) \leq \frac{\ln K}{\eta} +\eta\sum_{t=1}^T \sum_{i=1}^K q_{i,t} \E[(\hat{\ell}_{i,t})^2]
\end{equation*}
We bound the second moments of the estimate as follows,
\begin{equation*}
\E[(\hat{\ell}_{i,t})^2] = p\frac{(\bar{\ell}_{i,t} - p)^2}{(1-2p)^2} + (1-p)\frac{(\ell_{i,t} - p)^2}{(1-2p)^2} \leq \frac{1}{(1-2p)^2} = \frac{1}{\epsilon^2}\;,
\end{equation*}
where $\bar{\ell}_{i,t} = 1 - \ell_{i,t}$. Putting it back together and plugging $\eta = \epsilon \; \sqrt[]{\frac{\ln K}{T}}$ we obtain
\begin{equation*}
Regret(T) \leq \frac{\ln K}{\eta} +\frac{\eta T}{\epsilon^2} \leq \frac{2}{\epsilon} \; \sqrt[]{T \ln K}
\end{equation*}
\QEDB

\section{Proof of Theorem \ref{thm:4}} \label{app:thm:4}
By applying Lemma \ref{lem:3} and taking expectation on both sides we obtain
\begin{equation*}
\sum_{t=1}^T \sum_{i=1}^K q_{i,t} \E[\hat{\ell}_{i,t}] -  \sum_{t=1}^T \E[\hat{\ell}_{k,t}] \leq \frac{\ln K}{\eta} +
\eta\sum_{t=1}^T \sum_{i=1}^K q_{i,t}\E[(\hat{\ell}_{i,t})^2]
\end{equation*}
Calculating the expectation of the estimator $\hat{\ell}_{i,t}$, and since $\ell_{i,t}\in\{0,1\}$, we have,
\begin{equation*}
\E[\hat{\ell}_{i,t}] = (1-p)\ell_{i,t} + p \bar{\ell}_{i,t} = (1-2p)\ell_{i,t}+p=|\ell_{i,t} - p|
\end{equation*}
For the second moment we have $(\hat{\ell}_{i,t})^2 = c_{i,t}^2 = c_{i,t} \leq 1$. Putting things together we have
\begin{equation}
\label{eq:basic1}
\sum_{t=1}^T \sum_{i=1}^K q_{i,t}|\ell_{i,t} - p| -  \sum_{t=1}^T |\ell_{k,t} - p|
\leq \frac{\ln K}{\eta} + \eta T
\end{equation}
Using the notation of $\hat{L}_{ON, T}=\sum_{t=1}^T\sum_{i=1}^K q_{i,t}\hat{\ell}_{i,t}$, $\hat{L}_{k, T}=\sum_{t=1}^T\hat{\ell}_{k,t}$, $L_{ON, T}=\sum_{t=1}^T\sum_{i=1}^K q_{i,t}\ell_{i,t}$, and $L_{k, T}=\sum_{t=1}^T\ell_{k,t}$, we can write inequality (\ref{eq:basic1}) as
\begin{equation*}
\E[\hat{L}_{ON, T}] - \E[\hat{L}_{k,T}]
\leq \frac{\ln K}{\eta} + \eta T
\end{equation*}
Denote by $G_{t,b} = \{i \in A \;|\; \ell_{i,t} = b \}$ the set of actions with loss $b\in\{0,1\}$ in round $t$.
Denote by $Q_t = \sum_{i \in G_{t,1}}q_{i,t}$ the distribution mass the learner gives actions in $G_{t,1}$. Using this notation we have $L_{ON, T} = \sum_{t=1}^T Q_t$. 
Now calculate the value of the estimated losses of the online algorithm,
\begin{equation*}
\begin{aligned}
\E[\hat{L}_{ON, T}] &= \sum_{t=1}^T \sum_{i=1}^K q_{i,t}|\ell_{i,t} - p| = \sum_{t=1}^T [p\sum_{i \in G_{t,0}} q_{i,t} + (1-p) \sum_{i \in G_{t,1}} q_{i,t} ] \\
&= \sum_{t=1}^T [p(1-Q_t) + (1-p) Q_t ] = \sum_{t=1}^T [p + (1-2p) Q_t ] \\
&= (1-2p)L_{ON, T} + pT
\end{aligned}
\end{equation*}
Similarly,  for the term  $\E[\hat{L}_{k,T}]$ we have,
\begin{equation*}
\begin{aligned}
\E[\hat{L}_{k, T}] &= \sum_{t=1}^T |\ell_{k,t} - p| = \sum_{t | \ell_{t,k} = 0} p + \sum_{t | \ell_{t,k} = 1} (1-p)\\
&=p(T - L_{k, T}) + (1-p)  L_{k, T} = (1-2p)L_{k, T} + p T
\end{aligned}
\end{equation*}
Putting all together,
\begin{equation*}
\E[\hat{L}_{ON, T}] - \E[\hat{L}_{k,T}] = (1-2p)L_{ON, T} + p T - [(1-2p)L_{k, T} + pT] = (1-2p)[L_{ON, T} - L_{k, T}] 
\end{equation*}
Dividing by both sides of inequity by $(1-2p)$ and using $\eta = \sqrt{\frac{\ln K}{T}}$ we obtain that
\begin{equation*}
Regret(T) = L_{ON, T} - \min_{k\in A} L_{k, T} \leq \frac{1}{1-2p}(\frac{\ln K}{\eta} + \eta T)= \frac{2}{\epsilon}\sqrt[]{T \ln K}
\end{equation*}
\QEDB

\section{Proof of Theorem \ref{thm:5}}\label{app:thm:5}

To prove the theorem we first define the following adversarial loss assignment strategy: 
\begin{itemize}
\item the adversary initially picks uniformly a \textit{best action} $i^\star$ ($\forall i \; Pr[i^\star = i] = \frac{1}{K}$)
\item at round $t$: the adversary draws losses for the actions from the following distributions:\begin{enumerate}
\item for $i^\star$:  $\ell_{i^\star, t} \sim B(\frac{1}{2}-\delta)$
\item for $i \neq i^\star$:  $\ell_{i,t} \sim B(\frac{1}{2})$
\end{enumerate} 
\end{itemize}
where $\delta = \min\{\frac{1}{6\epsilon}\sqrt{\frac{\ln K}{T}}, \frac{1}{2}\}$.
Now we calculate the distribution of the \textbf{$\epsilon$-noisy feedback} $c_{i,t}$. Starting with the best action we have 
\begin{equation*}
\begin{aligned}
Pr[c_{i^\star,t} = 1] &= Pr[\ell_{i^\star, t} = 1]Pr[R_\epsilon = 0] + Pr[\ell_{i^\star, t} = 0]Pr[R_\epsilon = 1] \\
&= (\frac{1}{2}-\delta)\frac{1+\epsilon}{2} + (\frac{1}{2} + \delta)\frac{1-\epsilon}{2} = \frac{1}{2} - \epsilon\delta
\end{aligned}
\end{equation*}
For $i \neq i^\star$ we have
\begin{equation*}
\begin{aligned}
Pr[c_{i,t} = 1] &= Pr[\ell_{i, t} = 1]Pr[R_\epsilon = 0] + Pr[\ell_{i, t} = 0]Pr[R_\epsilon = 1] \\
&= \frac{1}{2}\frac{1+\epsilon}{2} + \frac{1}{2}\frac{1-\epsilon}{2} = \frac{1}{2}
\end{aligned}
\end{equation*}
Thus, we have: $c_{i^\star,t} \sim B(\frac{1}{2} - \epsilon\delta)$ and $c_{i,t} \sim B(\frac{1}{2})$ for $i\neq i^\star$.

The following is a standard claim regarding the minimum of i.i.d binomial random variables.

\begin{lemma}\label{lem:6}
Let $X_1,...,X_{K-1}$ be i.i.d random variables with distribute $B(n,p)$ such that $p \in (\frac{1}{4}, \frac{1}{2})$,  $n \geq 2 \ln K$ and $K \geq e^{18}$. Then with probability of at least $\frac{1}{2}$ we have
\begin{equation*}
\min \{X_1,...,X_{K-1}\} \leq np - \sqrt{\frac{p}{9} n \ln K}
\end{equation*}
\end{lemma}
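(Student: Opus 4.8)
The plan is to reduce the statement about the minimum of $K-1$ independent binomials to a single reverse (anti-concentration) tail bound, and then to establish that bound. Write $a = np - \sqrt{\frac{p}{9} n \ln K}$ for the target threshold. Since the $X_j$ are i.i.d., independence gives $\Pr[\min_j X_j > a] = (\Pr[X_1 > a])^{K-1} = (1 - \Pr[X_1 \le a])^{K-1}$, and using $1 - x \le e^{-x}$ this is at most $\exp(-(K-1)\Pr[X_1 \le a])$. Hence it suffices to prove the single-variable lower bound $\Pr[X_1 \le a] \ge \frac{\ln 2}{K-1}$, since then $\Pr[\min_j X_j > a] \le e^{-\ln 2} = \tfrac12$, which is exactly the claim. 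So the whole lemma rests on showing that a single $B(n,p)$ lands below its mean by $\sqrt{\frac{p}{9} n \ln K}$ with probability at least about $1/K$.

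The second and main step is this reverse tail bound. I would measure the deviation in units of the standard deviation $\sigma = \sqrt{np(1-p)}$: the shift is $s = \tfrac13\sqrt{pn\ln K}$, so $t := s/\sigma = \tfrac13\sqrt{\frac{\ln K}{1-p}}$, and because $p \in (\tfrac14, \tfrac12)$ we have $1-p \in (\tfrac12, \tfrac34)$ and therefore $t^2/2 = \frac{\ln K}{18(1-p)} \le \frac{\ln K}{9}$, strictly less than $\ln K$. The heuristic from the central/local limit theorem is that $\Pr[X_1 \le a] \approx \Pr[Z \le -t] \approx e^{-t^2/2} \ge K^{-1/9}$, which is far larger than the required $\frac{\ln 2}{K-1}$. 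To make this rigorous I would lower bound $\Pr[X_1 \le a]$ by a sum of binomial point probabilities $\binom{n}{k}p^k(1-p)^{n-k}$ over a window of about $\sigma$ integers $k$ just below $a$, estimating each term with Stirling's formula (equivalently, via the Kullback--Leibler form $\Pr[X_1 = k] \ge \frac{c}{\sigma}\exp(-n D(k/n \,\|\, p))$ with $n D(k/n \,\|\, p) \le t^2/2 + o(t^2)$ for $k$ in the window). Summing the window then yields $\Pr[X_1 \le a] \ge c\, e^{-t^2/2}/\mathrm{poly}(\ln K) \ge c\, K^{-1/9}/\mathrm{poly}(\ln K)$.

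The side conditions are used precisely to push these estimates through: $n \ge 2\ln K$ guarantees $s \ll n$, so the threshold $a$ is a genuine interior value (indeed $a > 0$) and the second-order terms in the Stirling/KL expansion are controlled; $K \ge e^{18}$ makes $\ln K \ge 18$ large enough that the $\mathrm{poly}(\sqrt{\ln K})$ prefactors and absolute constants are dominated, so that $K^{-1/9}/\mathrm{poly}(\ln K)$ still exceeds $\frac{\ln 2}{K-1}$ with room to spare; and $p \in (\tfrac14, \tfrac12)$ pins down $\sigma$ and the exponent $t^2/2 \le \frac{\ln K}{9}$ away from $\ln K$, which is what creates the gap $K^{-1/9} \gg K^{-1}$.

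The main obstacle is the reverse tail estimate in the second paragraph: unlike the standard Chernoff \emph{upper} bound, a matching \emph{lower} bound on a binomial tail is not a one-line application of a known inequality, and one must carefully handle the non-integrality of $a$, the Stirling error terms, and the summation over the window. A cleaner alternative that avoids explicit Stirling bookkeeping would be to invoke a known reverse-Chernoff / anti-concentration inequality for binomials (for instance Slud's inequality, which compares the binomial tail to the corresponding Gaussian tail) and then lower bound the Gaussian tail $\Pr[Z \le -t]$ by $\frac{c}{t}e^{-t^2/2}$; verifying the hypotheses of such an inequality in our parameter regime would replace the hand-rolled computation.
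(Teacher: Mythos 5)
Your skeleton is the same as the paper's: reduce $\min\{X_1,\dots,X_{K-1}\}$ to a single reverse tail bound via independence, then show $\Pr[X_1 \le np - t]$ is at least roughly $1/K$ for $t=\sqrt{\tfrac{p}{9}n\ln K}$. (Your reduction via $1-x\le e^{-x}$ and the target $\tfrac{\ln 2}{K-1}$ is in fact slightly cleaner than the paper's $(1-\tfrac1K)^{K-1}\le\tfrac12$.) The one substantive difference is how the anti-concentration step is closed. The paper simply cites a ready-made reverse Chernoff bound (Lemma~5.2 of Klein and Young), which gives $\Pr[X_1 - np \le -t] \ge \exp(-9t^2/(np)) = \tfrac1K$ exactly for $t\le\tfrac12 pn$; the hypotheses $n\ge 2\ln K$ and $K\ge e^{18}$ are exactly what that lemma's side conditions require. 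You instead propose to derive the bound from scratch, and your numerology is right and even gives slack (the deviation is $t\approx\tfrac13\sqrt{\ln K/(1-p)}$ standard deviations, so the tail is about $K^{-1/9}\gg 1/K$), but this is precisely the step you leave unexecuted, and it is the only nontrivial content of the lemma. Two cautions if you pursue your routes: the Stirling/local-CLT window argument is standard but genuinely tedious; and Slud's inequality in its usual form requires $p\le\tfrac12$ with $np\le k\le n(1-p)$, so the lower tail of $B(n,p)$ with $p<\tfrac12$ (equivalently the upper tail of $B(n,1-p)$ with $1-p>\tfrac12$) falls outside its stated hypotheses --- you would need one of its variants or a different reverse-Chernoff statement. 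Citing such a statement, as the paper does, turns your outline into a complete proof.
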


\begin{proof}
Denote by $Y=\min \{X_1,...,X_{K-1}\}$ then by interdependency we can write 
\begin{equation}\label{eq:6.1}
\begin{aligned}
Pr[Y \leq np - t] &= 1 - Pr[\forall i \in \{1,2,...,K-1\} X_i \geq np - t] \\
&= 1 - (Pr[X_1 \geq np - t])^{K-1}
\end{aligned}
\end{equation}\label{eq:6.2}
Now we want to bound $Pr[X_1 \geq np - t]$. Rearranging, and using \textit{Lemma 5.2} of \citet{klein1999number}, for $t \leq \frac{1}{2}pn$  we can bound
\begin{equation*}
\begin{aligned}
Pr[X_1 \geq np - t] &= Pr[X_1 - np \geq - t] = 1 - Pr[X_1 - np \leq - t] \\ 
&\leq 1 - \exp(-\frac{9t^2}{np})
= 1 - \frac{1}{K}
\end{aligned}
\end{equation*}
where in the last equation we take $t = \sqrt{\frac{p}{9} n \ln K} \leq \frac{1}{2} pn$.
Plugging it back in (\ref{eq:6.1}) we obtain
\begin{equation*}
Pr[Y \leq np - \sqrt{\frac{p}{9} n \ln K}] \geq 1 - (1-\frac{1}{K})^{K-1} \geq \frac{1}{2}
\end{equation*}
\end{proof}

Denoting by $C_{i,T} = \sum_{t=1}^T c_{i,t}$, the sum of the noisy feedback of action $i$. Note that this is binomial random variable. In addition, for $i^\star$ we have $C_{i^\star, T} \sim B(T, \frac{1}{2}-\epsilon\delta)$ and for $i \neq i^\star$ we have $ C_{i,T} \sim B(T, \frac{1}{2})$. By applying Lemma \ref{lem:6} on the noisy-feedbacks
we show the following corollary.

\begin{corollary}\label{cor:7}
With probability at least $\frac{1}{4}$ there exist action $j \neq i^\star$ such that $C_{j,T} < C_{i^\star, T}$.
\end{corollary}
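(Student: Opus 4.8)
The plan is to compare the minimum of the $K-1$ noisy-feedback sums of the sub-optimal actions against the feedback sum of $i^\star$, applying Lemma~\ref{lem:6} to the former and a median argument to the latter. I would first condition on the realized best action $i^\star$ (by symmetry the conditional probability will not depend on its value). Under the loss-assignment strategy and the independence of the noise across actions, $C_{i^\star,T}$ is independent of the family $\{C_{j,T}\}_{j\neq i^\star}$, the latter being i.i.d.\ $B(T,\tfrac12)$ while $C_{i^\star,T}\sim B(T,\tfrac12-\epsilon\delta)$. I set the threshold $m = \frac{T}{2} - \sqrt{\frac{T\ln K}{18}}$.

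First I would apply Lemma~\ref{lem:6} with $n = T$ and $p = \tfrac12$ to the sub-optimal feedbacks $\{C_{j,T}\}_{j\neq i^\star}$; the hypotheses $T\ge 2\ln K$ and $K\ge e^{18}$ are exactly those required, and the proof of Lemma~\ref{lem:6} goes through verbatim at the boundary value $p=\tfrac12$. This yields that the event $A=\{\min_{j\neq i^\star}C_{j,T}\le m\}$ has probability at least $\tfrac12$.

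Next I would lower bound $\Pr[C_{i^\star,T}>m]$. The mean of $C_{i^\star,T}$ is $\mu^\star=\frac{T}{2}-\epsilon\delta T$, and since $\delta\le\frac{1}{6\epsilon}\sqrt{\frac{\ln K}{T}}$ we have $\epsilon\delta T\le\frac16\sqrt{T\ln K}$, so $\mu^\star\ge\frac{T}{2}-\frac16\sqrt{T\ln K}$; note this bound holds regardless of which term achieves the minimum defining $\delta$. Because $\tfrac{1}{\sqrt{18}}>\tfrac16$, the gap $\mu^\star-m\ge\big(\tfrac{1}{\sqrt{18}}-\tfrac16\big)\sqrt{T\ln K}$ is strictly positive, and using $T\ge2\ln K$ together with $\ln K\ge18$ one checks it in fact exceeds $1$. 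I would then invoke the standard fact that a median $M$ of $B(T,q)$ satisfies $M\ge\lfloor Tq\rfloor\ge\mu^\star-1>m$, whence $\Pr[C_{i^\star,T}>m]\ge\Pr[C_{i^\star,T}\ge M]\ge\tfrac12$; call this event $B$.

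Finally, since $A$ is measurable with respect to $\{C_{j,T}\}_{j\neq i^\star}$ and $B$ with respect to $C_{i^\star,T}$, the two events are independent, so $\Pr[A\cap B]\ge\tfrac12\cdot\tfrac12=\tfrac14$. On $A\cap B$ there is some $j\neq i^\star$ with $C_{j,T}\le m<C_{i^\star,T}$, which is precisely the claimed event, and this conclusion is uniform over $i^\star$. The main obstacle is the bound $\Pr[B]\ge\tfrac12$: a Chernoff-type estimate is too weak here because the gap $\mu^\star-m$ is small relative to the standard deviation $\sqrt{T}/2$, so the median-of-binomial fact, plus the verification that this gap exceeds $1$, is the crucial ingredient.
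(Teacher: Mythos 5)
Your proof is correct and follows essentially the same route as the paper's: apply Lemma~\ref{lem:6} to the $K-1$ sub-optimal feedback sums, use the median-of-binomial fact to lower bound $C_{i^\star,T}$, and combine the two probability-$\tfrac12$ events by independence. You are in fact slightly more careful than the paper on two points it glosses over --- the application of Lemma~\ref{lem:6} at the boundary value $p=\tfrac12$ and the loss of up to $1$ from the floor in the median bound --- both of which you verify explicitly.
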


\begin{proof}
Applying Lemma \ref{lem:6} on the $K-1$ actions with $c_{i,t} \sim B(\frac{1}{2})$ we obtain that with probability at least $\frac{1}{2}$ there exist action $j \neq i^\star$ such that
\begin{equation*}
C_{j,T} \leq \frac{T}{2} - \sqrt{\frac{p}{9}T \ln K} < \frac{T}{2} - \frac{1}{6}\sqrt{T \ln K},
\end{equation*}
where the second inequality uses $p > 1/4$.\\
For the best action $i^\star$ we have $\E[C_{i^\star, T}] = \frac{T}{2} - \epsilon\delta T$. 
\begin{itemize}
\item if $\delta = \frac{1}{6\epsilon}\sqrt{\frac{\ln K}{T}}$ we have
\begin{equation*}
\E[C_{i^\star, T}] = \frac{T}{2} - \frac{1}{6}\sqrt{T \ln K}
\end{equation*}
Using the fact that for binomial distribution, $B(n,q)$, the median is $\floor{nq}$ or $\ceil{nq}$ we have that with probability at least $\frac{1}{2}$ 
\begin{equation*}
C_{i^\star, T} \geq \frac{T}{2} - \frac{1}{6}\sqrt{T \ln K}
\end{equation*}
\item  if $\delta = \frac{1}{2}$ we have that the distribution for the \textbf{$\epsilon$-noisy feedback} of the \textit{best action}, $c_{i^\star, t}$ is $B(\frac{1-\epsilon}{2})$, therefore
\begin{equation*}
\E[C_{i^\star, T}] = \frac{T}{2} - \frac{\epsilon}{2}T
\end{equation*}
$\delta = \frac{1}{2}$ implies $\epsilon \leq \frac{1}{3}\sqrt{\frac{\ln K}{T}}$ (as $\delta = \min\{\frac{1}{6\epsilon}\sqrt{\frac{\ln K}{T}}, \frac{1}{2}\}$) thus,  
\begin{equation*}
\frac{\epsilon}{2}T \leq \frac{T}{2}\frac{1}{3}\sqrt{\frac{\ln K}{T}} = \frac{1}{6}\sqrt{T \ln K}
\end{equation*}
Therefore, we still have that with probability at least $\frac{1}{2}$ 
\begin{equation*}
C_{i^\star, T} \geq \frac{T}{2} - \frac{1}{6}\sqrt{T \ln K}
\end{equation*}
\end{itemize}
Putting things together we obtain that with probability at least $\frac{1}{4}$ we have
\begin{equation*}
C_{i^\star, T} > C_{j,T}
\end{equation*}
\end{proof} 

The following lemma states that the action that has smaller observed noisy-loss has a higher probability to be the \textit{best action}.

\begin{lemma}\label{lem:8}
Let $C_{1,T},\ldots,C_{K,T}$ be a realization of the noisy-feedbacks, such that $C_{j_1,T} < C_{j_2,T}$, where $j_1, j_2 \in A$.
Then,
\begin{equation*}
\Pr[i^\star = j_1  \mid C_{1,T},\ldots,C_{K,T}] > \Pr[i^\star = j_2 \mid C_{1,T},\ldots,C_{K,T}]
\end{equation*}
\end{lemma}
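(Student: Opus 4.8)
The plan is to apply Bayes' rule and exploit the monotone likelihood ratio property of the binomial family. Since the adversary picks $i^\star$ uniformly, the prior over $i^\star$ is uniform, so the posterior $\Pr[i^\star = j \mid C_{1,T},\ldots,C_{K,T}]$ is proportional to the likelihood $\Pr[C_{1,T},\ldots,C_{K,T} \mid i^\star = j]$. First I would observe that, conditioned on $i^\star = j$, the noisy feedbacks $c_{i,t}$ are independent across actions $i$ and rounds $t$ (the per-action losses are drawn independently and the noise is independent), so $C_{j,T} \sim B(T,\frac{1}{2}-\epsilon\delta)$ while $C_{i,T} \sim B(T,\frac{1}{2})$ for every $i \neq j$, and these sums are mutually independent. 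Writing $q = \frac{1}{2}-\epsilon\delta$ and letting $f_q$ and $f_{1/2}$ denote the corresponding binomial PMFs, the likelihood factorizes as
\[
\Pr[C_{1,T},\ldots,C_{K,T} \mid i^\star = j] = f_q(C_{j,T})\prod_{i \neq j} f_{1/2}(C_{i,T}).
\]

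Next I would factor out the action-independent quantity $\prod_{i=1}^K f_{1/2}(C_{i,T})$, which reduces the likelihood (up to a common factor that is the same for all candidate best actions) to the single-action likelihood ratio $f_q(C_{j,T})/f_{1/2}(C_{j,T})$. Because the prior is uniform, this ratio is, up to a common normalizing constant, exactly the posterior probability of $i^\star = j$. Comparing the two candidates $j_1$ and $j_2$ then gives
\[
\frac{\Pr[i^\star = j_1 \mid C_{1,T},\ldots,C_{K,T}]}{\Pr[i^\star = j_2 \mid C_{1,T},\ldots,C_{K,T}]}
= \frac{f_q(C_{j_1,T})/f_{1/2}(C_{j_1,T})}{f_q(C_{j_2,T})/f_{1/2}(C_{j_2,T})}
= \left(\frac{q}{1-q}\right)^{C_{j_1,T} - C_{j_2,T}},
\]
where the last equality uses that the binomial coefficients $\binom{T}{\cdot}$ and the $2^{-T}$ factors of $f_{1/2}$ cancel.

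Finally, since $\epsilon \in (0,\frac{1}{2})$ and $\delta > 0$ we have $q = \frac{1}{2}-\epsilon\delta < \frac{1}{2}$, hence $\frac{q}{1-q} < 1$; combined with the hypothesis $C_{j_1,T} - C_{j_2,T} < 0$ this forces the displayed ratio to be strictly greater than $1$, which is exactly the claim.

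I expect the only genuine subtlety to be the careful justification of the conditional independence across actions that underlies the likelihood factorization and the cancellation of the common factor $\prod_i f_{1/2}(C_{i,T})$; once that is in place the argument is simply the monotone likelihood ratio property of the binomial family, with the strictness of the inequality coming precisely from $\epsilon\delta > 0$ (so that $q < \frac{1}{2}$).
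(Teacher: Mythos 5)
Your proof is correct and follows essentially the same route as the paper: Bayes' rule with a uniform prior, factorization of the likelihood over actions, cancellation of the common $B(T,\tfrac12)$ factors, and monotonicity of $q^{C}(1-q)^{T-C}$ in $C$ for $q<\tfrac12$. If anything you are slightly more careful than the paper, which omits the binomial coefficients (they do cancel, as you note) and writes the best action's per-round success probability as $\tfrac{1-\epsilon}{2}$ where it should be $\tfrac12-\epsilon\delta$ in general; your version with $q=\tfrac12-\epsilon\delta$ is the correct one.
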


\begin{proof}
Using Bayes' theorem we have for action $j \in A$ that
\begin{equation*}
\begin{aligned}
\Pr[i^\star = j  \mid C_{1,T},\ldots,C_{K,T}] &= \frac{\Pr[C_{1,T},\ldots,C_{K,T} \mid i^\star = j] \Pr[i^\star = j]}{\Pr[C_{1,T},\ldots,C_{K,T}]} \\&
= \frac{\Pr[C_{j,T} \mid i^\star = j](\frac{1}{2}^T)^{K-1} \frac{1}{K}}{\Pr[C_{1,T},\ldots,C_{K,T}]} = \frac{\Pr[C_{j,T} \mid i^\star = j]}{Z} \\ &= \frac{1}{Z}(\frac{1-\epsilon}{2})^{C_{j,T}}(\frac{1+\epsilon}{2})^{T-C_{j,T}}
\end{aligned}
\end{equation*}
where $Z = \frac{(\frac{1}{2}^T)^{K-1} \frac{1}{K}}{\Pr[C_{1,T},\ldots,C_{K,T}]}$ is a constant (not depend on $j$). Therefore, if $C_{j_1,T} < C_{j_2,T}$ then
\begin{equation*}
\Pr[i^\star = j_1  \mid C_{1,T},\ldots,C_{K,T}] > \Pr[i^\star = j_2 \mid C_{1,T},\ldots,C_{K,T}]
\end{equation*}
\end{proof}

Using the lemma we can show the following corollary.

\begin{corollary}\label{cor:9}
consider an algorithm for ``predicting the best action'' problem: that is, the algorithm input is a realization  $C_{1,T},\ldots,C_{K,T}$, i.e., for one action $i^\star$ we have $C_{i^\star, T} \sim B(T, \frac{1}{2}-\epsilon\delta)$ and for $j \neq i^\star$ we have $C_{j,T} \sim B(T, \frac{1}{2})$ and the output is an action $I_T$ - a prediction for which action is optimal. Then for any algorithm we have, 
\begin{equation*}
\Pr[I_T \neq i^\star] \geq \frac{1}{4}
\end{equation*}
where the probability is taken over the randomness of the algorithm, the losses, the noise and the draw of $i^\star$.
\end{corollary}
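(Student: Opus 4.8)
The plan is to reduce the claim to the two facts already established---Lemma~\ref{lem:8} (the posterior is monotone in the observed loss) and Corollary~\ref{cor:7} (the noisy loss of $i^\star$ is beaten with probability at least $\frac14$)---by arguing that no predictor can do better than the Bayes-optimal one. The key observation is that the output $I_T$ depends only on the observed vector $\vec{C}=(C_{1,T},\ldots,C_{K,T})$ (together with internal randomness independent of everything else), so $I_T$ is conditionally independent of $i^\star$ given $\vec{C}$. Consequently, for a realization $\vec{C}$,
\begin{equation*}
\Pr[I_T = i^\star \mid \vec{C}] = \sum_{j=1}^K \Pr[I_T = j \mid \vec{C}]\,\Pr[i^\star = j \mid \vec{C}] \leq \max_{j}\Pr[i^\star = j \mid \vec{C}].
\end{equation*}

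Next I would use Lemma~\ref{lem:8}: since $\Pr[i^\star = j \mid \vec{C}]$ is strictly decreasing in $C_{j,T}$, the maximum on the right-hand side is attained at $j^{\min} := \argmin_{j} C_{j,T}$. Taking expectation over $\vec{C}$ gives $\Pr[I_T = i^\star] \leq \Pr[i^\star = j^{\min}]$, and hence
\begin{equation*}
\Pr[I_T \neq i^\star] \geq \Pr[i^\star \neq j^{\min}].
\end{equation*}
It remains to lower bound the right-hand side, which no longer depends on the algorithm. By Corollary~\ref{cor:7}, with probability at least $\frac14$ there is an action $j \neq i^\star$ with $C_{j,T} < C_{i^\star,T}$; on this event $i^\star$ is strictly above the minimum, so $j^{\min}\neq i^\star$. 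Thus $\Pr[i^\star \neq j^{\min}]\geq \frac14$, which combined with the previous display proves the corollary.

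I expect the only delicate point to be the first step: justifying that an arbitrary, possibly randomized, predictor cannot outperform the Bayes-optimal rule, i.e.\ the passage to the conditional expectation and the bound by $\max_j \Pr[i^\star = j \mid \vec{C}]$, using the conditional independence of $I_T$ and $i^\star$ given $\vec{C}$. Once this reduction is in place, Lemma~\ref{lem:8} and Corollary~\ref{cor:7} supply the rest essentially immediately.
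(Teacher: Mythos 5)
Your proof is correct and follows essentially the same route as the paper's: identify the count-minimizing (Bayes-optimal) predictor via Lemma~\ref{lem:8} and then bound its error probability using Corollary~\ref{cor:7}. The only difference is that you spell out the reduction from an arbitrary randomized predictor to the Bayes-optimal rule (via conditional independence of $I_T$ and $i^\star$ given $\vec{C}$), a step the paper leaves implicit.
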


\begin{proof}
Lemma \ref{lem:8} implies that the optimal algorithm will predict 
\begin{equation*}
I_T = \argmin_{j \in A} \{C_{1,T},\ldots,C_{K,T}\}
\end{equation*}
From Corollary \ref{cor:7} we have that for the optimal algorithm
\begin{equation*}
\Pr[I_T \neq i^\star] \geq \frac{1}{4}
\end{equation*}
\end{proof}

Putting it all together we can now prove the theorem.

\begin{proof}\textbf{of Theorem \ref{thm:5}:}
For any round $t$ we would think of the algorithm as algorithm for ``predicting the best action'' problem. Using this we can think of $t$ as the the time horizon and by applying Corollary \ref{cor:9} conclude that for every $t$ we have  
\begin{equation*}
\Pr[I_t \neq i^\star] \geq \frac{1}{4}
\end{equation*}
Therefore the expectation of the regret, when the expectation is taken over the losses, the noise and the draw of $i^\star$ (note that the regret itself includes the randomness of the algorithm) satisfies,
\begin{equation*}
\E[Regret(T)] = \sum_{t=1}^T \Pr[I_t \neq i^\star]\delta \geq \frac{1}{4}T\delta,
\end{equation*}
where $\delta = \min\{\frac{1}{6\epsilon}\sqrt{\frac{\ln K}{T}}, \frac{1}{2}\}$ concludes the proof. 
\end{proof}

\section{Proof of theorem \ref{thm:10}}\label{app:thm:10}
By applying Lemma \ref{lem:3} and taking expectation on both sides we obtain
\begin{equation}\label{eq:18}
\sum_{t=1}^T \sum_{i=1}^K q_{i,t} \E[\hat{\ell}_{i,t}] -  \sum_{t=1}^T \E[\hat{\ell}_{k,t}] \leq \frac{\ln K}{\eta} + \eta\sum_{t=1}^T \sum_{i=1}^K q_{i,t}\E[(\hat{\ell}_{i,t})^2]
\end{equation}
Conditioning on $p_{i,t} \leq \frac{1-\theta}{2}$, the estimator $\hat{\ell}_{i,t}$ is biased, however, we can bound the deviation. Specifically,
\begin{equation*}
\E[\hat{\ell}_{i,t}] = \theta * 0 + (1-\theta)\E[\hat{\ell}_{i,t} \; | \; p_{i,t} \leq \frac{1-\theta}{2}]=(1-\theta) \ell_{i,t} 
\end{equation*}
This implies that 
\[
\ell_{i,t}-\theta \leq \E[\hat{\ell}_{i,t}]\leq \ell_{i,t}
\]
To bound the second moment we have
\begin{equation*}
\E[(\hat{\ell}_{i,t})^2] = \theta*0 + (1-\theta) \E[(\hat{\ell}_{i,t})^2 \; | \; p_{i,t} \leq \frac{1-\theta}{2}] \leq \E[(\hat{\ell}_{i,t})^2 \; | \; p_{i,t} \leq \frac{1-\theta}{2}]
\end{equation*}
We bound the conditional expectation above as follows,
\begin{equation*}
\begin{aligned}
\E[(\hat{\ell}_{i,t})^2 \; | \; p_{i,t} \leq \frac{1-\theta}{2}] &= p_{i,t}\frac{(\bar{\ell}_{i,t} - p_{i,t})^2}{(1-2p_{i,t})^2} + (1-p_{i,t})\frac{(\ell_{i,t} - p_{i,t})^2}{(1-2p_{i,t})^2} \\
&\leq \frac{1}{(1-2p_{i,t})^2} = \frac{1}{\epsilon_{i,t}^2}
\end{aligned}
\end{equation*}
Computing the expectation, given that the marginal is $U(0,1)$, we have,
\begin{equation*}
\begin{aligned}
\E[(\hat{\ell}_{i,t})^2] &\leq \E[(\hat{\ell}_{i,t})^2 \; | \; p_{i,t} \leq \frac{1-\theta}{2}] \leq \E_{\epsilon \sim U(0, 1)} \; [\frac{1}{\epsilon^2} \mathbbm{1}_{\epsilon \geq \theta} ] \\
&= \int_{\theta}^{1} \frac{1}{\epsilon^2} d\epsilon = -[\frac{1}{\epsilon}]_{\theta}^1=\frac{1}{\theta} - 1 \leq \frac{1}{\theta}
\end{aligned}
\end{equation*}
Bounding the expressions in inequality (\ref{eq:18}) we obtain
\begin{equation*}
\begin{aligned}
\sum_{t=1}^T \sum_{i=1}^K q_{i,t} \E[\hat{\ell}_{i,t}] -  \sum_{t=1}^T \E[\hat{\ell}_{k,t}] 
&  \geq \sum_{t=1}^T \sum_{i=1}^K q_{i,t}\ell_{i,t} - \sum_{t=1}^T \ell_{k,t} - \theta T \\
 \frac{\ln K}{\eta} + \eta\sum_{t=1}^T \sum_{i=1}^K q_{i,t}\E[(\hat{\ell}_{i,t})^2]
& \leq \frac{\ln K}{\eta} +
\eta\sum_{t=1}^T \sum_{i=1}^K q_{i,t} \frac{1}{\theta} = \frac{\ln K}{\eta} + \frac{\eta T}{\theta}
\end{aligned}
\end{equation*}
Rearranging the terms gives us,
\[
\sum_{t=1}^T \sum_{i=1}^K q_{i,t}\ell_{i,t} - \sum_{t=1}^T \ell_{k,t} \leq \frac{\ln K}{\eta} + \frac{\eta T}{\theta} + \theta T
\]
Substituting $\eta = (\frac{\ln K}{T})^{2/3} \text{ and } \theta = (\frac{\ln K}{T})^{1/3}$ concludes the proof. 
\QEDB

\section{Proof of Theorem~\ref{thm:11}}\label{app:thm:11} 
Let $\theta= (\frac{\ln K}{T})^{1/3}$.
Initially, the adversary choose an action $i^{\star}$ uniformly at random, and it will be the best action. Then, for each round $t$ after observing $\epsilon_t$, the adversary assigns losses as follow:
\begin{enumerate}
\item If $\epsilon_t  \geq \theta$ then $\ell_{i,t} = 0 $ for every action $i$.
\item Otherwise ($\epsilon_t< \theta$) the adversary draw a loss for each action as follows: for action $i^{\star}$ the loss is drawn from $B(\frac{1}{2}-\frac{1}{6})$ and for any other action $j\neq i^{\star}$ it is drawn from $B(\frac{1}{2})$.
\end{enumerate}
Denote by $T'$ the number of \textit{bad rounds}. Since $\E[T'] = \theta T$ and the fact that for Binomial distribution, $B(n,p)$, the median is $\floor{np}$ or $\ceil{np}$ we conclude that with probability at least $1/2$ we have  $T' \geq \theta T$. Condition on this event we assume that $T' = \theta T$ (if $T' > \theta T$ we take the first $\theta T$ rounds to be $T'$) we reduce the \textit{bad rounds} to the constant noise setting in the following way:\\
In the \textit{bad rounds} we have $\epsilon_t \sim U(0,\theta)$. If we assume that in the \textit{bad rounds} we have $\epsilon_t = \theta$, namely a constant noise, then we only reduced the noise in the model. We call the model with $\epsilon_t = \theta \text{ and } T = T'$ the \textit{reduced model}. Therefore, a lower bound for the regret in the \textit{reduced model} is also a lower bound for a model where $\epsilon_t \sim U(0,\theta)$.\\
Our \textit{reduced model} is the \textbf{Full Information with Constant Noise} model with $T = T'$ and $\epsilon = \theta$. Denote by $Regret(T',\theta)$ the regret in the \textbf{Full Information with Constant Noise} model with horizon $T'$ and noise parameter $\theta$. Now, we can apply Theorem \ref{thm:5} on the \textit{reduced model} and obtain that
\begin{equation*}
Regret(T',\theta) \geq \gamma \frac{1}{\theta}\sqrt{T' \ln K} 
\end{equation*}
where $\gamma > 0$ is a constant. Setting $T' = \theta T = T^{2/3}(\ln K)^{1/3}$ we obtain that 
\begin{equation*}
Regret(\theta T,\theta) \geq \frac{1}{\theta}\sqrt{\theta T \ln K} = \gamma T^{2/3}(\ln K)^{1/3}
\end{equation*}
\noindent
Putting it back in the original model yields, 
\begin{equation*}
Regret(T) \geq \Pr[T' \geq \theta T]Regret(\theta T, \theta) \geq \frac{\gamma}{2}T^{2/3}(\ln K)^{1/3}
\end{equation*}
(We note that the number $\frac{1}{6}$ in the distribution $B(\frac{1}{2}-\frac{1}{6})$ the adversary uses, comes from the $\delta=\frac{1}{6\epsilon}\sqrt{\frac{\ln K}{T}}$ we use in the proof of theorem \ref{thm:5} with $\epsilon = \theta \text{ and } T = T'$).
\QEDB

\section{Proof of Theorem~\ref{thm:12}}\label{app:thm:12} 

We apply Lemma \ref{lem:3}, and taking expectation on both sides, obtain,
\begin{equation}\label{eq:28}
\sum_{t=1}^T \sum_{i=1}^K q_{i,t} \E[\hat{\ell}_{i,t}] -  \sum_{t=1}^T \E[\hat{\ell}_{k,t}] \leq \frac{\ln K}{\eta} + \eta\sum_{t=1}^T \sum_{i=1}^K q_{i,t}\E[(\hat{\ell}_{i,t})^2]
\end{equation}
Conditioning on $p_{i,t} \leq \frac{1-\theta}{2}$, the estimator $\hat{\ell}_{i,t}$ is biased, and we have
\begin{equation*}
\E[\hat{\ell}_{i,t}] = F(\theta) * 0 + (1-F(\theta))\E[\hat{\ell}_{i,t} \; | \; p_{i,t} \leq \frac{1-\theta}{2}]=(1-F(\theta)) \ell_{i,t} 
\end{equation*}
This implies that 
\[
\ell_{i,t}-F(\theta) \leq \E[\hat{\ell}_{i,t}]\leq \ell_{i,t}
\]
To bound the second moment we have
\begin{equation*}
\E[(\hat{\ell}_{i,t})^2] = F(\theta)*0 + (1-F(\theta)) \E[(\hat{\ell}_{i,t})^2 \; | \; p_{i,t} \leq \frac{1-\theta}{2}] \leq \E[(\hat{\ell}_{i,t})^2 \; | \; p_{i,t} \leq \frac{1-\theta}{2}]
\end{equation*}
We bound the above conditional expectation as follows,
\begin{equation*}
\begin{aligned}
\E[(\hat{\ell}_{i,t})^2 \; | \; p_{i,t} \leq \frac{1-\theta}{2}] &= p_{i,t}\frac{(\bar{\ell}_{i,t} - p_{i,t})^2}{(1-2p_{i,t})^2} + (1-p_{i,t})\frac{(\ell_{i,t} - p_{i,t})^2}{(1-2p_{i,t})^2} \\
&\leq \frac{1}{(1-2p_{i,t})^2} = \frac{1}{\epsilon_{i,t}^2}
\end{aligned}
\end{equation*}
Bounding each side of inequality (\ref{eq:28}) we have
\begin{equation*}
\begin{aligned}
\sum_{t=1}^T \sum_{i=1}^K q_{i,t} \E[\hat{\ell}_{i,t}] -  \sum_{t=1}^T \E[\hat{\ell}_{k,t}] 
&  \geq \sum_{t=1}^T \sum_{i=1}^K q_{i,t}\ell_{i,t} - \sum_{t=1}^T \ell_{k,t} - F(\theta) T \\
 \frac{\ln K}{\eta} + \eta\sum_{t=1}^T \sum_{i=1}^K q_{i,t}\E[(\hat{\ell}_{i,t})^2]
& \leq \frac{\ln K}{\eta} + \eta\sum_{t=1}^T \sum_{i=1}^K q_{i,t}\E[\frac{1}{\epsilon^{2}}\mathbbm{1}_{\epsilon \geq \theta}]
\end{aligned}
\end{equation*}
Rearranging it all yield
\[
\sum_{t=1}^T \sum_{i=1}^K q_{i,t}\ell_{i,t} - \sum_{t=1}^T \ell_{k,t} \leq \frac{\ln K}{\eta} + \eta T g(\theta) + F(\theta) T
\]
\QEDB

\section{Proof of Corollary~\ref{cor:13}}\label{app:cor:13} 
Using Theorem \ref{thm:12} and the assumption we can write
\begin{equation*}
Regret(T) \leq 2 \sqrt{g(\theta)T \ln K} + \theta^\alpha T
\end{equation*}
since $g(\theta)=\E[\frac{1}{\epsilon^{2}}\mathbbm{1}_{\epsilon \geq \theta}] \leq \frac{1}{\theta^2}$, we have,
\begin{equation*}
Regret(T) \leq \frac{2}{\theta}\sqrt{T \ln K} + \theta^\alpha T
\end{equation*}
taking $\theta = (\frac{2}{\alpha})^{\frac{1}{1+\alpha}}(\frac{\ln K}{T})^{\frac{1}{2(1+\alpha)}}$ gives
\begin{equation*}
Regret = O(T^{\frac{2+\alpha}{2+2\alpha}}(\ln K)^{\frac{\alpha}{2(1+\alpha)}})
\end{equation*}
\QEDB

\section{Proof of Corollary~\ref{cor:14}}\label{app:cor:14}
Applying Theorem \ref{thm:12} gives
\begin{equation}\label{eq:111}
Regret(T) \leq 2\sqrt{g(\theta)T \ln K} + F(\theta)T
\end{equation}
To bound $g(\theta)$ we calculate
\begin{equation*}
\begin{aligned}
g(\theta) &= \E[\frac{1}{\epsilon^{2}}\mathbbm{1}_{\epsilon \geq \theta}] \int_{\theta}^{1} \frac{1}{\epsilon^2}\frac{\lambda}{1-e^{-\lambda}}e^{-\lambda \epsilon} d\epsilon \leq  \frac{\lambda}{1-e^{-\lambda}} \int_{\theta}^{1} \frac{1}{\epsilon^2} d\epsilon \\
&= \frac{\lambda}{1-e^{-\lambda}}(\frac{1}{\theta} - 1) \leq  \frac{\lambda}{1-e^{-\lambda}}\frac{1}{\theta} \leq \frac{\lambda}{\theta}
\end{aligned}
\end{equation*}
Bounding the second term we use the inequality $1-e^{-x} \leq x \text{ for } x > 0$ and obtain
\begin{equation*}
F(\theta) = \frac{\lambda}{1-e^{-\lambda}}(1-e^{-\lambda \theta}) \leq \lambda^2 \theta
\end{equation*}
Putting it back in (\ref{eq:111}) we have
\begin{equation*}
Regret(T) \leq 2\sqrt{\frac{1}{\theta} \ln K} + \lambda^2 \theta T
\end{equation*}
setting $\theta = \frac{1}{\lambda}(\frac{\ln K}{T})^{1/3}$ yields,
\[
Regret(T) \leq 3 \lambda T^{2/3}(\ln K)^{1/3}
\]
\QEDB

\section{Proof of Theorem~\ref{thm:15}}\label{app:thm:15}
Let the number of actions be $K = 2$. Assume that initially the adversary picks the best action uniformly (that is, with probability $\frac{1}{2}$ action 1 will be the best action and with probability $\frac{1}{2}$ action 2 will be the best action). Let $i^\star \in \{1,2\}$ be a random variable
denoting the best action and $j = 3 - i^\star$ denote the worse action.
On round $t$, after observing the noise parameters $p_{1,t}$ and $p_{2,t}$, the adversary selects the
losses as follow:
\begin{enumerate}
\item  For the best action, $i^\star$, the loss is drawn at every round independently from a Bernoulli
r.v. with parameter $1/4$, i.e., $\ell_{i^\star, t} \sim B(\frac{1}{4})$
\item For the worse action $j$: if $p_{j,t} < 1/4$ then the loss is $\ell_{j,t} = 0$, otherwise the loss is $\ell_{j,t} = 1$.
\end{enumerate}
For the learner, observing the feedback $c_{i,t}=\ell_{i,t} \oplus r_{i,t}$, the loss of each action is a Bernoulli random variable. We will show that both actions will have the same probability of $1$, namely
$3/8$, and therefore indistinguishable by the learner.\par
Now we calculate the expected value of the observed feedback, $c_{i,t}=\ell_{i,t} \oplus r_{i,t}$, for
each action in a single round. We note that this expectation is taken over the draw of $\epsilon_{i,t} \sim U(0,1)$, the draw $R_{i,t} \sim B(\frac{1-\epsilon_{i,t}}{2})$  and the draw of the losses $\ell_{i,t}$. We also note that if $\epsilon \sim  U(0, 1)$  then $p \sim U(0,\frac{1}{2})$.\par
The expected loss of best action, $\ell_{i^\star,t}$ is drawn independently from the noise parameter $\epsilon_{i^\star, t}$ and the Bernoulli noise $R_{i,t}$. Therefore, we have
\begin{equation*}
\begin{aligned}
\E[c_{i^\star,t}] &= \E_p[\E_R[\E_\ell[\ell_{i^\star,t} \oplus R_{i^\star,t} ] \mid p]] = \E_p[\E_R[\frac{1}{4}(1 \oplus R_{i^\star,t}) + \frac{3}{4}(0 \oplus R_{i^\star,t}) \mid p]] \\
&= \frac{1}{4}\E_p[p_{i^\star,t} \cdot 0 + (1 - p_{i^\star,t}) \cdot 1] + \frac{3}{4}\E_p[p_{i^\star,t} \cdot 1 + (1 - p_{i^\star,t}) \cdot 0] \\
&=\frac{1}{4} \cdot \frac{3}{4} + \frac{3}{4} \cdot \frac{1}{4} = \frac{3}{8}
\end{aligned}
\end{equation*}
For the worse action, action $j$, we have
\begin{equation*}
\begin{aligned}
\E[c_{j,t]} &= \E[\ell_{j,t} \oplus R_{j,t} ] = \frac{1}{2}\E[0 \oplus R_{j,t} \mid p_{j,t} < 1/4] + \frac{1}{2}\E[1 \oplus R_{j,t} \mid \frac{1}{4} \leq p_{j,t} < \frac{1}{2}] \\
&= \frac{1}{2}\E[p_{j,t} \mid p_{j,t} < \frac{1}{4}] + \frac{1}{2} \E[ 1 - p_{j,t} \mid \frac{1}{4} \leq p_{j,t} < \frac{1}{2}] \\
&= \frac{1}{2} \cdot \frac{1}{8} + \frac{1}{2}(1 - \frac{3}{8}) = \frac{3}{8}
\end{aligned}
\end{equation*}
This implies that the feedback of both the best and worse action is a Bernoulli random
variable with parameter $\frac{3}{8}$, i.e., $B(\frac{3}{8})$. This clearly implies that the learner cannot distinguish
between the two actions, and therefore, half the time it will select the worse action.
The best action has an expected loss of $\frac{T}{4}$ while the worse action has a loss of $\frac{T}{2}$. This implies that the expected regret would be at least $\frac{T}{8}$.
\QEDB

\section{Proof of Theorem \ref{thm:16.5}} \label{app:thm:16.5}
By applying Lemma \ref{lem:3} and taking expectation on both sides we obtain
\begin{equation*}
\sum_{t=1}^T \sum_{i=1}^K q_{i,t} \E[\hat{\ell}_{i,t}] -  \sum_{t=1}^T \E[\hat{\ell}_{k,t}] \leq \frac{\ln K}{\eta} +
\eta\sum_{t=1}^T \sum_{i=1}^K q_{i,t}\E[(\hat{\ell}_{i,t})^2]
\end{equation*}
Calculating the expectation of the estimator $\hat{\ell}_{i,t}$, and since $\ell_{i,t}\in\{0,1\}$, we have,
\begin{equation*}
\E[\hat{\ell}_{i,t}]= q_{i,t}\frac{1}{q_{i,t}}\E[c_{i,t}] = \E[c_{i,t}] = (1-p)\ell_{i,t} + p \bar{\ell}_{i,t} = (1-2p)\ell_{i,t}+p=|\ell_{i,t} - p|
\end{equation*}
For the second moment, since $c_{i,t} \leq 1$ we have 
\begin{equation*}
\E[(\ell_{i,t})^2] = q_{i,t}\frac{1}{q_{i,t}^2}\E[c_{i,t}] \leq \frac{1}{q_{i,t}}
\end{equation*}
Putting things together we have
\begin{equation}
\label{eq:basic}
\sum_{t=1}^T \sum_{i=1}^K q_{i,t}|\ell_{i,t} - p| -  \sum_{t=1}^T |\ell_{k,t} - p|
\leq \frac{\ln K}{\eta} + \eta T K
\end{equation}
Using the notation of $\hat{L}_{ON, T}=\sum_{t=1}^T\sum_{i=1}^K q_{i,t}\hat{\ell}_{i,t}$, $\hat{L}_{k, T}=\sum_{t=1}^T\hat{\ell}_{k,t}$, $L_{ON, T}=\sum_{t=1}^T\sum_{i=1}^K q_{i,t}\ell_{i,t}$, and $L_{k, T}=\sum_{t=1}^T\ell_{k,t}$, we can write inequality (\ref{eq:basic}) as
\begin{equation*}
\E[\hat{L}_{ON, T}] - \E[\hat{L}_{k,T}]
\leq \frac{\ln K}{\eta} + \eta T K
\end{equation*}
Denote by $G_{t,b} = \{i \in A \;|\; \ell_{i,t} = b \}$ the set of actions with loss $b\in\{0,1\}$ in round $t$.
Denote by $Q_t = \sum_{i \in G_{t,1}}q_{i,t}$ the distribution mass the learner gives actions in $G_{t,1}$. Using this notation we have $L_{ON, T} = \sum_{t=1}^T Q_t$. 
Now calculate the value of the estimated losses of the online algorithm,
\begin{equation*}
\begin{aligned}
\E[\hat{L}_{ON, T}] &= \sum_{t=1}^T \sum_{i=1}^K q_{i,t}|\ell_{i,t} - p| = \sum_{t=1}^T [p\sum_{i \in G_{t,0}} q_{i,t} + (1-p) \sum_{i \in G_{t,1}} q_{i,t} ] \\
&= \sum_{t=1}^T [p(1-Q_t) + (1-p) Q_t ] = \sum_{t=1}^T [p + (1-2p) Q_t ] \\
&= (1-2p)L_{ON, T} + pT
\end{aligned}
\end{equation*}
Similarly,  for the term  $\E[\hat{L}_{k,T}]$ we have,
\begin{equation*}
\begin{aligned}
\E[\hat{L}_{k, T}] &= \sum_{t=1}^T |\ell_{k,t} - p| = \sum_{t | \ell_{t,k} = 0} p + \sum_{t | \ell_{t,k} = 1} (1-p)\\
&=p(T - L_{k, T}) + (1-p)  L_{k, T} = (1-2p)L_{k, T} + p T
\end{aligned}
\end{equation*}
Putting all together,
\begin{equation*}
\E[\hat{L}_{ON, T}] - \E[\hat{L}_{k,T}] = (1-2p)L_{ON, T} + p T - [(1-2p)L_{k, T} + pT] = (1-2p)[L_{ON, T} - L_{k, T}] 
\end{equation*}
Dividing by both sides of inequity by $(1-2p)$ and using $\eta = \sqrt{\frac{\ln K}{T K}}$ we obtain that
\begin{equation*}
Regret(T) = L_{ON, T} - \min_{k\in A} L_{k, T} \leq \frac{1}{1-2p}(\frac{\ln K}{\eta} + \eta T K)= \frac{2}{\epsilon}\sqrt[]{T K\ln K}
\end{equation*}
\QEDB

\section{Proof of Theorem~\ref{thm:17}}\label{app:thm:17} 

We first define $K$ different problem instances, one per action. Let $\beta \in (0,1)$ be a parameter. We denote by $J_i$ the problem instance where action $i$ loss is drawn from the distribution $B(\frac{1-\beta}{2})$ while the other actions loss is drawn from the distribution $B(\frac{1}{2})$. For problem instance $J_i$, we refer action $i$ as the \textit{best action}. The proof will show that in some sense those instances are indistinguishable for any algorithm. 

For the proof, we will think of the online algorithm as a leaner making ``prediction'' for the best action at each round $t$. The main part of the proof is to show that if $T$ is not large enough the algorithm has to have a constant mistake rate. 

We denote by $\Pr[I_t=i | J_i]$ the probability that in instance $J_i$, at round $t$ the algorithm selects action $i$ (the best action in instance $J_i$).
The following lemma shows that for many actions the algorithm will make a mistake.

\begin{lemma}\label{lem:20}
Consider a deterministic algorithm for the \textit{Bandit with Constant Noise} problem with noise $p=\frac{1-\epsilon}{2}$. There exist a constant $\gamma$ such that if $t < \gamma\frac{K}{\epsilon^2 \beta^2}$ then there exist at least $\lceil \frac{K}{2} \rceil$ actions $i$ such that
\begin{equation*}
\Pr[I_t=i | J_i] < \frac{3}{4}
\end{equation*}
\end{lemma}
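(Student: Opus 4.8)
The plan is to run the standard information-theoretic lower bound for multi-armed bandits, adapted to the noisy feedback. The first step is to translate the instances into the law of what the learner actually observes. I would compute the distribution of the bandit feedback $c_{i,t} = \ell_{i,t}\oplus R_\epsilon$: in instance $J_i$, a short XOR computation (as in the proof of Theorem~\ref{thm:5}) gives that pulling the best action $i$ yields $c_{i,t}\sim B(\tfrac{1-\beta\epsilon}{2})$, whereas every other action yields $c\sim B(\tfrac12)$. I would introduce the reference instance $J_0$ in which every action has loss $B(\tfrac12)$, so that under $J_0$ every observed feedback is $B(\tfrac12)$. The essential point is that the noise shrinks the effective gap from $\beta$ to $\beta\epsilon$: the learner must distinguish $B(\tfrac12)$ from $B(\tfrac12-\tfrac{\beta\epsilon}{2})$.

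Next I would compare each $J_i$ to $J_0$ via KL divergence. Writing $\Pr_i,\Pr_0$ for the two laws of the observed history up to round $t-1$, and $N_i(t-1)$ for the number of pulls of action $i$ before round $t$, the fact that the algorithm is deterministic and only sees the feedback of the pulled arm lets me apply the divergence decomposition: the two laws differ only through pulls of action $i$, so $\mathrm{KL}(\Pr_0\|\Pr_i) = \E_0[N_i(t-1)]\cdot \mathrm{KL}(B(\tfrac12)\|B(\tfrac{1-\beta\epsilon}{2}))$, with $\E_0$ taken under $J_0$. A direct calculation gives $\mathrm{KL}(B(\tfrac12)\|B(\tfrac{1-\beta\epsilon}{2})) = \tfrac12\ln\tfrac{1}{1-\beta^2\epsilon^2}\le C\beta^2\epsilon^2$ for an absolute constant $C$, provided $\beta\epsilon$ is bounded away from $1$ (which the outer proof will guarantee). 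Pinsker's inequality then yields $\Pr_i[I_t=i]\le \Pr_0[I_t=i] + \sqrt{\tfrac12\,\mathrm{KL}(\Pr_0\|\Pr_i)}\le \Pr_0[I_t=i] + \beta\epsilon\sqrt{C/2}\,\sqrt{\E_0[N_i(t-1)]}$.

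Then I would sum over all $K$ actions and exploit the symmetry of $J_0$. Using $\sum_i \Pr_0[I_t=i]=1$, the identity $\sum_i \E_0[N_i(t-1)] = t-1\le t$, and Cauchy--Schwarz in the form $\sum_i\sqrt{\E_0[N_i(t-1)]}\le \sqrt{K\,t}$, I get $\sum_{i=1}^K \Pr_i[I_t=i]\le 1 + \beta\epsilon\sqrt{C/2}\,\sqrt{Kt}$. Substituting the hypothesis $t<\gamma K/(\epsilon^2\beta^2)$ bounds $\sqrt{Kt}< K\sqrt{\gamma}/(\epsilon\beta)$, hence $\sum_i \Pr_i[I_t=i]\le 1 + \sqrt{C\gamma/2}\,K$. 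A counting argument finishes: if $m$ actions satisfied $\Pr_i[I_t=i]\ge 3/4$, then $\tfrac34 m\le 1+\sqrt{C\gamma/2}\,K$, so choosing the absolute constant $\gamma$ small enough forces $m<\lceil K/2\rceil$, which means at least $\lceil K/2\rceil$ actions $i$ have $\Pr_i[I_t=i]<3/4$.

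The hard part will be the setup of the divergence decomposition in the second step: one must verify carefully that for a deterministic bandit algorithm only the pulled arm's feedback law contributes to $\mathrm{KL}(\Pr_0\|\Pr_i)$, and that the visitation counts appearing there are expectations under the reference measure $J_0$ (which is exactly what makes them sum to $t-1$ and lets Cauchy--Schwarz give $\sqrt{Kt}$). Everything else is routine: the per-pull KL bound is a one-line computation, and the final averaging only requires fixing $\gamma$ small and keeping $\beta\epsilon$ away from $1$.
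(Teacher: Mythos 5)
Your proof is correct and takes essentially the same route as the paper: both arguments reduce the noisy bandit to a standard best-arm-identification lower bound by computing that the pulled best arm's feedback is $B(\tfrac{1-\epsilon\beta}{2})$ while every other arm's is $B(\tfrac12)$, so the effective gap is $\epsilon\beta$. The only difference is that the paper then invokes Lemma 2.10 of \citet{slivkins2017introduction} as a black box, whereas you reprove that lemma from first principles (reference instance, divergence decomposition, Pinsker, counting), which is a valid self-contained substitute carrying the same small-gap caveat as the cited lemma.
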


\begin{proof}
Consider the feedback distribution for each problem instance $J_j$ and action $i$. First, if $\ell_{i,t} \sim B(\frac{1}{2})$ then $c_{i,t} \sim B(\frac{1}{2})$ (the noise does not have any influence). For the \textit{best action}, i.e., $j$, we have $c_{j,t} \sim B(\frac{1-\epsilon\beta}{2})$ since
\begin{equation*}
\begin{aligned}
Pr[c_{j,t}=1] &= Pr[\ell_{j,t}=1]Pr[R_\epsilon = 0] + Pr[\ell_{j,t}=0]Pr[R_\epsilon = 1] \\
&= (\frac{1-\beta}{2})(\frac{1+\epsilon}{2})+(\frac{1+\beta}{2})(\frac{1-\epsilon}{2}) = \frac{1 - \epsilon\beta}{2}
\end{aligned}
\end{equation*}
Applying \textit{Lemma 2.10} of \citet{slivkins2017introduction} on the feedbacks $c_{i,t}$ completes the proof. 
\end{proof}

\begin{corollary}\label{cor:21}
Choose the best action $i^\star$ uniformly from $A$ and use instance $J_{i^\star}$. For any algorithm, for any round $t < \gamma\frac{K}{\epsilon^2 \beta^2}$, we have  $\Pr[I_t \neq i^\star]\geq 1/8$. 
\end{corollary}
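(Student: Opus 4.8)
The plan is to obtain the corollary directly from Lemma~\ref{lem:20} by averaging the per-instance mistake probability over the uniformly random planted action $i^\star$, after first reducing the case of an arbitrary (randomized) learner to the deterministic case that the lemma addresses.

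First I would dispense with randomization. A randomized learner is a distribution over deterministic learners indexed by its internal random seed $r$; conditioning on $r$ freezes a deterministic algorithm to which Lemma~\ref{lem:20} applies verbatim. It therefore suffices to prove $\Pr[I_t \neq i^\star \mid r] \geq 1/8$ for each fixed $r$ and then take expectation, since $\Pr[I_t \neq i^\star] = \E_r[\Pr[I_t \neq i^\star \mid r]]$ and a bound uniform over all deterministic algorithms survives averaging over the seed.

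Next, fix such a deterministic algorithm and a round $t < \gamma \frac{K}{\epsilon^2 \beta^2}$. Lemma~\ref{lem:20} supplies a set $S$ of at least $\lceil K/2\rceil$ actions $i$ with $\Pr[I_t = i \mid J_i] < 3/4$, equivalently $\Pr[I_t \neq i \mid J_i] > 1/4$. Since $i^\star$ is drawn uniformly and the learner is then run on $J_{i^\star}$, the quantity $\Pr[I_t \neq i \mid J_i]$ is precisely the conditional mistake probability given $i^\star = i$, so I would expand and discard the non-negative contributions of actions outside $S$:
\[
\Pr[I_t \neq i^\star] = \frac{1}{K}\sum_{i=1}^K \Pr[I_t \neq i \mid J_i] \;\geq\; \frac{1}{K}\sum_{i \in S}\Pr[I_t \neq i \mid J_i] \;>\; \frac{1}{K}\cdot \Big\lceil \frac{K}{2}\Big\rceil \cdot \frac{1}{4} \;\geq\; \frac{1}{8},
\]
using $\lceil K/2\rceil \geq K/2$ in the last step. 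Re-introducing the conditioning on $r$ throughout and averaging over $r$ then gives the stated bound for arbitrary algorithms.

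The step needing the most care is not the averaging, which is routine, but verifying that the per-instance quantity $\Pr[I_t = i \mid J_i]$ of Lemma~\ref{lem:20} coincides exactly with the conditional $\Pr[I_t = i \mid i^\star = i]$ arising in the combined experiment; one must confirm that the randomness of the noise and of the losses is accounted for identically in both statements, and in particular that the adaptive dependence of $I_t$ on the history of noisy feedback from rounds $1,\ldots,t-1$ is already folded into the lemma's probability. The deterministic-to-randomized lifting is the only other subtlety, and it is clean precisely because the $1/8$ bound holds uniformly over deterministic algorithms.
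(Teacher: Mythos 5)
Your proposal is correct and follows essentially the same route as the paper: apply Lemma~\ref{lem:20} to a deterministic algorithm, average the per-instance mistake probability $\Pr[I_t \neq i \mid J_i] > 1/4$ over the uniform choice of $i^\star$ (at least half the actions qualify, giving $\tfrac12\cdot\tfrac14 = \tfrac18$), and lift to randomized algorithms by viewing them as mixtures of deterministic ones. Your write-up is somewhat more explicit about the averaging identity and the conditioning on the seed, but the argument is the same.
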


\begin{proof}
For a deterministic algorithm the corollary follows since by Lemma \ref{lem:20} with probability at least $ \frac{1}{2}$ the selected $i^\star$ is such that $\Pr[I_t\neq i^\star | J_{i^\star}] \geq \frac{1}{4}$. 
Since a randomized algorithm is a distribution over deterministic algorithms that claim hold also for randomized algorithms.
\end{proof}

\begin{proof}\textbf{of Theorem \ref{thm:17}:}
Let $\beta = \min\{\frac{\sqrt{\gamma}}{\epsilon}\sqrt{\frac{K}{T}}, 1\}$.
By Corollary \ref{cor:21}, 
we have that in each round $t$
\begin{equation*}
\Pr[I_t \neq i^\star ] \geq \frac{1}{8}
\end{equation*}
Denote by $\Delta_t = \E[\ell_{I_t, t}] - \E[\ell_{i^\star, t}]$ the regret of round $t$. Note that if $I_t \neq i^\star$ then $\Delta_t = \frac{1}{2} - \frac{1-\beta}{2} = \frac{\beta}{2}$. Therefore, the expected regret at round $t$ is
\begin{equation*}
\E[\Delta_t] = \Pr[I_t \neq i^\star] \frac{\beta}{2} 
\end{equation*}
Summing over the rounds we have, 
\begin{equation*}
Regret(T) = \sum_{t=1}^T\E[\Delta_t] \geq \frac{1}{16} \beta T 
\end{equation*}
Since $\beta = \min\{\frac{\sqrt{\gamma}}{\epsilon}\sqrt{\frac{K}{T}}, 1\}$, we have
\begin{equation*}
Regret(T) \geq \min \{\frac{\sqrt{\gamma}}{16} \frac{1}{\epsilon}\sqrt{TK}, \frac{1}{16}T \}
\end{equation*}
\end{proof}

\section{Proof of Theorem~\ref{thm:18}}\label{app:thm:18} 
By applying Lemma \ref{lem:3} and taking expectation on both sides we obtain
\begin{equation}\label{eq:18a}
\sum_{t=1}^T \sum_{i=1}^K q_{i,t} \E[\hat{\ell}_{i,t}] -  \sum_{t=1}^T \E[\hat{\ell}_{k,t}] \leq \frac{\ln K}{\eta} + \eta\sum_{t=1}^T \sum_{i=1}^K q_{i,t}\E[(\hat{\ell}_{i,t})^2]
\end{equation}
Conditioning on $p_{i,t} \leq \frac{1-\theta}{2}$, the estimator $\hat{\ell}_{i,t}$ is unbiased, since
\begin{equation*}
\E[\hat{\ell}_{i,t} \; \mid  \; p_t \leq \frac{1-\theta}{2}] =q_{i,t}[\frac{1}{q_{i,t}} \frac{p\bar{\ell}_{i,t} + (1-p)\ell_{i,t} - p}{1-2p}] = \ell_{i,t}\;.
\end{equation*}
However, overall the estimator is biased,
\begin{equation*}
\E[\hat{\ell}_{i,t}] = \theta * 0 + (1-\theta)\E[\hat{\ell}_{i,t} \; | \; p_{i,t} \leq \frac{1-\theta}{2}]=(1-\theta) \ell_{i,t} 
\end{equation*}
This implies that 
\[
\ell_{i,t}-\theta \leq \E[\hat{\ell}_{i,t}]\leq \ell_{i,t}
\]
To bound the second moment we have
\begin{equation*}
\E[(\hat{\ell}_{i,t})^2] = \theta*0 + (1-\theta) \E[(\hat{\ell}_{i,t})^2 \; | \; p_{i,t} \leq \frac{1-\theta}{2}] \leq \E[(\hat{\ell}_{i,t})^2 \; | \; p_{i,t} \leq \frac{1-\theta}{2}]
\end{equation*}
The conditional expectation of the second moment is bounded as follows,
\begin{equation*}
\E[(\hat{\ell}_{i,t})^2 \; | \; p_t \leq \frac{1-\delta}{2}] = \frac{1}{q_{i,t}} [ p_t\frac{(\bar{\ell}_{i,t} - p_t)^2}{(1-2p_t)^2} + (1-p_t)\frac{(\ell_{i,t} - p_t)^2}{(1-2p_t)^2}] \leq \frac{1}{q_{i,t}} \frac{1}{(1-2p_t)^2} = \frac{1}{q_{i,t}} \frac{1}{\epsilon_t^2}
\end{equation*}
Since the marginal of the noise distribution $D$ is uniform, we have,
\begin{equation}
\begin{aligned}
\E[(\hat{\ell}_{i,t})^2] &\leq \E[(\hat{\ell}_{i,t})^2 \; | \; p_{i,t} \leq \frac{1-\theta}{2}] \leq \E_{\epsilon \sim U(0, 1)} \; [\frac{1}{q_{i,t}}\frac{1}{\epsilon^2} \mathbbm{1}_{\epsilon \geq \theta} ] \\
&= \frac{1}{q_{i,t}} \int_{\theta}^{1} \frac{1}{\epsilon^2} d\epsilon = -\frac{1}{q_{i,t}}[\frac{1}{\epsilon}]_{\theta}^1= \frac{1}{q_{i,t}}(\frac{1}{\theta} - 1) \leq \frac{1}{q_{i,t}}\frac{1}{\theta}
\end{aligned}
\end{equation}
Bounding each side of inequality (\ref{eq:18a}) we have
\begin{equation}
\begin{aligned}
\sum_{t=1}^T \sum_{i=1}^K q_{i,t} \E[\hat{\ell}_{i,t}] -  \sum_{t=1}^T \E[\hat{\ell}_{k,t}] 
&  \geq \sum_{t=1}^T \sum_{i=1}^K q_{i,t}\ell_{i,t} - \sum_{t=1}^T \ell_{k,t} - \theta T \\
 \frac{\ln K}{\eta} + \eta\sum_{t=1}^T \sum_{i=1}^K q_{i,t}\E[(\hat{\ell}_{i,t})^2]
& \leq \frac{\ln K}{\eta} +
\eta\sum_{t=1}^T \sum_{i=1}^K q_{i,t}[\frac{1}{q_{i,t}}\frac{1}{\theta}] = \frac{\ln K}{\eta} + \frac{\eta T K}{\theta}
\end{aligned}
\end{equation}
Rearranging it all yield
\[
\sum_{t=1}^T \sum_{i=1}^K q_{i,t}\ell_{i,t} - \sum_{t=1}^T \ell_{k,t} \leq \frac{\ln K}{\eta} + \frac{\eta T K}{\theta} + \theta T
\]
Substituting $\eta = \frac{(ln K)^{2/3}}{K^{1/3}T^{2/3}} \text{ and } \theta = \frac{K^{1/3}(ln K)^{1/3}}{T^{1/3}}$ concludes the proof. 
\QEDB

\section{Proof of Theorem~\ref{thm:19}}\label{app:thm:19} 

Let $\theta= (\frac{K}{T})^{1/3}$.
Initially, the adversary choose an action $i^{\star}$ uniformly at random, and it will be the best action. Then, for each round $t$ after observing $\epsilon_t$, the adversary assigns losses as follow: fix $\beta = \frac{\sqrt{\gamma}}{\theta}\sqrt{\frac{K}{T}} = \sqrt{\gamma}(\frac{K}{T})^{1/6}$ and at round $t$ do
\begin{enumerate}
\item if $\epsilon_t  \geq \theta$ then $\ell_{i,t} = 0 $ for every action $i$.
\item Otherwise ($\epsilon_t< \theta$) the adversary draw a loss for each action as follows: for action $i^{\star}$ the loss is drawn from $B(\frac{1}{2}-\beta)$ and for any other action $j\neq i^{\star}$ it is drawn from $B(\frac{1}{2})$.
\end{enumerate}
Denote by $T'$ the number of \textit{bad rounds}. Since $\E[T'] = \theta T$ and the fact that for Binomial distribution, $B(n,p)$, the median is $\floor{np}$ or $\ceil{np}$ we conclude that with probability at least $1/2$ we have  $T' \geq \theta T$. Condition on this event we assume that $T' = \theta T$ (if $T' > \theta T$ we take the first $\theta T$ rounds to be $T'$) we reduce the \textit{bad rounds} to the constant noise setting in the following way:\\
In the \textit{bad rounds} we have $\epsilon_t \sim U(0,\theta)$. If we assume that in the \textit{bad rounds} we have $\epsilon_t = \theta$, namely a constant noise, then we only reduced the noise in the model. We call the model with $\epsilon_t = \theta \text{ and } T = T'$ the \textit{reduced model}. Therefore, a lower bound for the regret in the \textit{reduced model} is also a lower bound for a model where $\epsilon_t \sim U(0,\theta)$.\\
Our \textit{reduced model} is the \textbf{Bandit with Constant Noise} model with $T = T'$ and $\epsilon = \theta$. Denote by $Regret(T',\theta)$ the regret in the \textbf{Bandit with Constant Noise} model with horizon $T'$ and noise parameter $\theta$. Now, we can apply Theorem \ref{thm:15} on the \textit{reduced model} and obtain that
\begin{equation*}
Regret(T',\theta) \geq \gamma \frac{1}{\theta}\sqrt{T' K} 
\end{equation*}
where $\gamma > 0$ is a constant. Setting $T' = \theta T = T^{2/3}K^{1/3}$ we obtain that 
\begin{equation*}
Regret(\theta T,\theta) \geq \frac{1}{\theta}\sqrt{\theta T K} = \gamma T^{2/3}K^{1/3}
\end{equation*}
\noindent
Putting it back in the original model yields, 
\begin{equation*}
Regret(T) \geq \Pr[T' \geq \theta T]*Regret(\theta T, \theta) \geq \frac{\gamma}{2}T^{2/3} K^{1/3}
\end{equation*}
(We note here that the choice of $\beta$ is according to the proof of Theorem \ref{thm:15} with $\epsilon = \theta \text{ and } T = T' = \theta T$).
\QEDB

\end{document}